
\documentclass{article}

\usepackage{amsmath,amssymb, amsthm}
\usepackage{amsfonts}
\usepackage{layout}
\usepackage{amsmath}
\usepackage{amsthm}
\usepackage{amssymb}
\usepackage{url}
\usepackage{color}
\usepackage{graphicx}
\usepackage{verbatim}

\newcommand{\ve}[2]{\langle #1 ,  #2 \rangle}

\newcommand{\pseudoinv}[1]{#1^{-1}}

\newcommand{\R}{\mathbb{R}}
\newcommand{\Prob}{\mathbf{Prob}}


\newcommand{\bA}{\mathbf A}

\newcommand{\bD}{\mathbf D}
\newcommand{\bM}{\mathbf M}

\newcommand{\bI}{\mathbf I}

\newcommand{\bX}{\mathbf X}
\newcommand{\bY}{\mathbf Y}

\newcommand{\bG}{\mathbf G}
\newcommand{\bS}{\mathbf S}

\newcommand{\Sam}{\hat{S}}

\DeclareMathOperator{\support}{{supp}}


\DeclareMathOperator{\Exp}{\mathbb{E}}           







\newtheorem{assumption}{Assumption}
\newtheorem{lemma}{Lemma}
\newtheorem{theorem}{Theorem}
\newtheorem{proposition}{Proposition}

\theoremstyle{plain}

\theoremstyle{definition}

\makeatletter
\newcommand*{\rom}[1]{\expandafter\@slowromancap\romannumeral #1@}
\makeatother


\usepackage{times}
\usepackage{graphicx} 
\usepackage{subfigure} 
\usepackage{epstopdf}
\usepackage{natbib}
\usepackage{algorithm}
\usepackage{algorithmic}

\usepackage{hyperref}





\usepackage[accepted]{icml2015a}

\DeclareMathOperator*{\argmin}{arg\,min}
\def\<#1,#2>{\langle #1,#2\rangle}

\icmltitlerunning{SDNA: Stochastic Dual Newton Ascent for Empirical Risk Minimization}

\begin{document} 

\twocolumn[

\icmltitle{SDNA: Stochastic Dual Newton Ascent for Empirical Risk Minimization}

\icmlauthor{Zheng Qu}{zheng.qu@ed.ac.uk}
\icmladdress{School of Mathematics, University of Edinburgh, UK}
\icmlauthor{Peter Richt\'{a}rik}{peter.richtarik@ed.ac.uk}
\icmladdress{School of Mathematics, University of Edinburgh, UK}
\icmlauthor{Martin Tak\'{a}\v{c}}{Takac.MT@gmail.com}
\icmladdress{Industrial and Systems Engineering, Lehigh University, USA}
\icmlauthor{Olivier Fercoq}{olivier.fercoq@telecom-paristech.fr}
\icmladdress{Telecom Paris-Tech, France}

\icmlkeywords{empirical risk minimization, randomized algorithms, iterative Hessian sketch, coordinate descent, convex optimization, duality}

\vskip 0.3in
]

\begin{abstract}
We propose a new algorithm for minimizing regularized empirical loss: {\em Stochastic Dual Newton Ascent (SDNA)}. Our method is dual in nature: in each iteration we update a random subset of the dual variables. However, unlike existing methods such as stochastic dual coordinate ascent, SDNA  is capable of utilizing {\em all curvature information} contained in the examples, which leads to striking improvements in both theory and practice -- sometimes by orders of magnitude. In the special case when an L2-regularizer is used in the primal, the dual problem is a concave quadratic maximization problem plus a separable term. In this regime, SDNA in each step solves a proximal subproblem involving  a random principal submatrix of the Hessian of the quadratic function; whence the name of the method. If, in addition, the loss functions are quadratic, our method can be interpreted as a novel variant of the recently introduced {\em Iterative Hessian Sketch}. 

\end{abstract}

\section{Introduction}

Empirical risk minimization (ERM) is a fundamental paradigm in the theory and practice of statistical inference and machine learning \cite{UML:book}. In the ``big data'' era it is increasingly common in practice to solve ERM problems with a massive number of examples, which leads to new algorithmic challenges. 

State-of-the-art optimization methods for ERM include i) stochastic (sub)gradient descent \cite{Pegasos-MAPR, Pegasos2}, ii) methods based on stochastic estimates of the gradient with diminishing variance such as SAG \cite{SAG}, SVRG \cite{SVRG}, S2GD \cite{S2GD}, proxSVRG \cite{proxSVRG},   MISO \cite{MISO}, SAGA \cite{SAGA}, minibatch S2GD \cite{mS2GD}, S2CD \cite{S2CD}, and iii) variants of stochastic dual coordinate ascent \cite{SDCA, IProx-SDCA, Pegasos2, ASDCA, APROXSDCA, APCG, QUARTZ, MinibatchASDCA}.



There have been several attempts at designing methods that combine randomization with the use of curvature (second-order) information. For example, methods based on running coordinate ascent in the dual such as those mentioned above and also 
\cite{UCDC, PCDM, SPCDMx, DQA, Hydra, NSync, APPROX, Hydra2, QUARTZ, ALPHA} use curvature information contained in the diagonal of a bound on the Hessian matrix. Block coordinate descent methods, when equipped with suitable data-dependent norms for the blocks, use information contained in the block diagonal of the Hessian \cite{ICD}. 
 
A more direct route to incorporating curvature information was taken by  \citet{Stoch-LBFGS} in their stochastic L-BFGS method, by \citet{StochQNewton} and \citet{SGD-QN} in their stochastic quasi-Newton methods and by \citet{RobustBCD} who proposed a stochastic block coordinate descent methods.   While typically efficient in practice, none of the methods mentioned above are equipped with complexity bounds (bounds on the number of iterations). An exception in this regard is the work of  \citet{SGD-QN-2009}, who give a $O(1/\epsilon)$ complexity bound for a Quasi-Newton SGD method.


\subsection{Contributions}

The main contribution of this paper is the design and  analysis of a new algorithm---{\bf stochastic dual Newton ascent  (SDNA)}---for solving a regularized ERM problem with smooth loss functions and a strongly convex regularizer (primal problem). Our method is stochastic in nature and has the capacity to  utilize {\em all curvature information} inherent in the data. While we do our analysis for an arbitrary strongly convex regularizer, for the purposes of the introduction we shall describe the method in the case of the L2 regularizer.  In this case, the dual problem is a concave quadratic maximization problem with a strongly concave separable penalty. 

{\bf SDNA in each iteration picks a random subset of the dual variables (which corresponds to picking a minibatch of examples in the primal problem), following an arbitrary probability law, and maximizes,  exactly, the dual objective restricted to the random subspace spanned by the coordinates.}  Equivalently, this can be seen as the solution of a proximal subproblem involving a random principal submatrix of the Hessian of the quadratic function. Hence, {\bf SDNA utilizes  all curvature information available in the random subspace in which it operates.}  Note that this is very different from the update strategy of parallel / minibatch coordinate descent methods. Indeed, while these methods  also  update a random subset of variables in each iteration, they instead only utilize curvature information present in the diagonal of the Hessian. 

As we will explain in detail in the  text, SDCA-like methods need {\em more} iterations (and hence more passes through data) to convergence as the  minibatch size increases. However, SDNA enjoys the opposite behavior: {\bf with increasing minibatch size, SDNA needs  fewer iterations (and hence fewer passes over data) to convergence.} This observation can be deduced from the complexity results we prove for SDNA, and is also confirmed by our numerical experiments.  In particular, we show that {\bf the expected  duality gap decreases at a geometric rate} which i) is better than that of SDCA-like methods such as SDCA \cite{SDCA} and QUARTZ \cite{QUARTZ}, and  ii) improves with increasing minibatch size.  This improvement does not come for free: as we increase the minibatch size, the subproblems  grow in size as they involve larger portions of the Hessian. We find through experiments that for some, especially dense problems, {\bf even relatively small minibatch sizes lead to dramatic speedups in actual runtime.}

We show that {\bf in the case of quadratic loss, and when viewed as a primal method, SDNA can be interpreted as a  variant of the recently introduced Iterative Hessian Sketch algorithm} \cite{IteHeSke}.

En route to developing SDNA which we describe in Section~\ref{sec:ERM}, {\bf we also develop several other new algorithms}: two  in Section~\ref{sec:smooth} (where we focus on smooth problems), one in  Section~\ref{sec:proximal} (where we focus on composite problems). Besides SDNA, we also develop and analyze a {\bf novel minibatch variant of SDCA} in Section~\ref{sec:ERM},  for the sake of finding suitable method to compare SDNA to.  SDNA is equivalent to applying the new method developed in  Section~\ref{sec:proximal} to the dual of the ERM problem. However, as we are mainly interested in solving the ERM (primal) problem, we additionally prove that the expected duality gap decreases at a geometric rate. Our technique for doing this is a variant of the one use by \citet{SDCA}, but generalized to an arbitrary sampling.

%

\subsection{Notation} 

\textbf{Vectors.}  By $e_1,\dots, e_n$ we denote the standard basis vectors in  $\R^n$. For any $x\in \R^n$, we denote by $x_i$ the $i$th element of $x$, i.e., $x_i=e_i^\top x$. For any two vectors $x,y$ of equal size, we write $\ve{x}{y} = x^\top y = \sum_{i} x_i y_i$, and by $x\circ y$ we denote their  Hadamard (i.e., elementwise) product. We also write $u^{-1}=(u^{-1}_1,\dots,u_n^{-1})$.

\textbf{Matrices.} $\bI$ is the identity matrix  in $\R^{n \times n}$ and $\bD(w)$ is the diagonal matrix in $\R^{n\times n}$ with $w\in \R^n$ on its diagonal. We will write  $\bM\succeq 0$ (resp. $\bM\succ 0$)  to indicate that  $\bM$ is positive semidefinite (resp. positive definite).

\textbf{Subsets of coordinates.} Let  $S$ be a nonempty subset of $[n]:=\{1,2,\dots,n\}$. For any matrix $\bM \in \R^{n\times n}$ we write 
$\bM_S$ for the matrix  obtained from $\bM$ by 
retaining elements $\bM_{ij}$ for which both $i\in S$ and $j\in S$ and zeroing out all other elements. Clearly, $\bM_S = \bI_S \bM \bI_S$. Moreover, for any vector $h\in \R^n$ we write 
\begin{equation}\label{eq:decomposition_of_h}
\textstyle h_S := \bI_S h = \sum_{i=1}^n h_i e_i.
\end{equation}
Note that we can thus write
\begin{equation}\label{eq:hhatS}
(h_S)^\top \bM h_{S}= h^\top \bI_S \bM \bI_S h = h^\top \bM_{S} h,
\end{equation}
and
that for $x,y\in \R^n$ we have
\begin{equation}\label{eq:hSw}
 \ve{x_S}{y}= \ve{\bI_S x}{y} = \ve{x}{\bI_S y} = \ve{x}{y_S}.
\end{equation}
By $\pseudoinv{(\bM_{S})}$  we denote the matrix in $\R^{n\times n}$ for which 
 \begin{equation}\label{eq:inverses} \pseudoinv{ (\bM_{S}) } \bM_{S} = \bM_{S} \pseudoinv{(\bM_{S}) } =  \bI_{S}.\end{equation}

\section{Minimization of a Smooth Function} \label{sec:smooth}

In this section we consider unconstrained minimization of a differentiable convex function:
\begin{equation}\label{eq:probse}
 \min_{x\in \R^n}    f(x).
 \end{equation}
In particular, we shall assume smoothness (Lipschitz continuity of the gradient) and strong convexity of $f$:
\begin{assumption}[Smoothness] \label{ass:smoothness}  There is a positive definite matrix $\bM\in \R^{n\times n}$
 such that for all $x,h\in \R^n$,
 \begin{equation}f(x+h) \leq f(x) + \ve{\nabla f(x)}{h} + \frac{1}{2}\ve{\bM h}{h} \label{a:smoothness} 
\end{equation}
\end{assumption}

\begin{assumption}[Strong convexity]\label{ass:strong_convexity}  There is a positive definite matrix $\bG\in \R^{n\times n}$
 such that for all $x,h\in \R^n$,
\begin{equation}f(x) + \ve{\nabla f(x)}{h} + \frac{1}{2}\ve{\bG h}{h} \leq f(x+h).\label{a:strong_convexity} 
\end{equation}
\end{assumption}


\subsection{Three stochastic algorithms}

We now describe three algorithmic strategies for solving problem~\eqref{eq:probse}, {\bf the first two of which are new.} All these methods  have the form
\begin{equation}\label{eq:4methods}x^{k+1} \leftarrow x^k + h^k,\end{equation}
where $h_i^k$ is only allowed to be nonzero for $i\in S_k$, where $\{S_k\}_{k\geq 0}$ are i.i.d. random subsets of $[n]:=\{1,2,\dots,n\}$ (``samplings''). That is, all methods in each iteration update a random subset of the variables. The four methods will only differ in how the update elements $h_i^k$ for $i\in S_k$ are computed. If we wish the methods to work, we necessarily need to require that every coordinate has a positive probability of being sampled. For certain technical reasons that will be apparent later, we will also assume that $S_k$ is nonempty with probability 1. 

\begin{assumption}[Samplings] \label{ass:samplings} The random sets   $\{S_k\}_{k\geq 0}$ are i.i.d., proper
 (i.e., $\Prob(i\in S_k)>0$ for all $i\in [n]$) and nonvacuous (i.e., $\Prob(S_k=\emptyset)=0$). 
\end{assumption}

Much of our discussion will depend on the distribution of $S_k$ rather than on $k$. As $\{S_k\}_{k\geq 0}$ are i.i.d., we will write $\Sam$ for a sampling which shares their distribution. We will write $p=(p_1,\dots,p_n)$ where 
\begin{equation}
\label{eq:p_i}p_i := \Prob(i\in \Sam), \qquad i\in [n].\end{equation}
By Assumption~\ref{ass:samplings}, we have $p_i>0$ for all $i$. We now describe the methods.

\textbf{Method 1.} We compute $\pseudoinv{(\bM_{S_k})}$ and set
\[\tag{Method $1$} h^k = - \pseudoinv{(\bM_{S_k})} \nabla f(x^k).\]
Note that the update only involves the inversion of a random principal submatrix of $\bM$ of size $|S_k|\times |S_k|$. Also, we only need to compute elements $i\in S_k$ of the gradient $\nabla f(x_k)$. If $|S_k|$ is reasonably small, the update step is cheap.

\textbf{Method 2.}  We compute the inverse of $\Exp[\bM_{\Sam}]$ and set
\[\tag{Method $2$} h^k = - \bI_{S_k} (\Exp [\bM_{\Sam}])^{-1} \bD(p)\nabla f(x^k).\]
This strategy easily implementable when $|\Sam|=1$ with probability 1 (i.e., if we update a single variable only). This is because then $\Exp[\bM_{\Sam}]$ is a diagonal matrix with the $(i,i)$ element equal to $p_i \bM_{ii}$. Hence, the update step simplifies to $h^k_i = - \tfrac{1}{\bM_{ii}} \ve{e_i}{\nabla f(x^k)}$ for $i\in S_k$ and $h^k_i=0$ for $i\notin S_k$. For more complicated samplings $\Sam$, however, the matrix $\Exp[\bM_{\Sam}]$ will be as hard to invert as $\bM$.

\textbf{Method 3.} We compute a vector $v\in \R^n$ for which
\begin{equation}\label{eq:ESO}\Exp[\bM_{\Sam}] \preceq \bD(p)\bD( v)\end{equation} and then set
\begin{equation}\label{eq:STRATEGY4}\tag{Method $3$} h^k = -  \bI_{S_k} (\bD( v))^{-1}  \nabla f(x^k).\end{equation}
Assuming $v$ is easily computable (this should be done before the methods starts), the update is clearly very easy to perform. Indeed,
the update can be equivalently written as $h_i^k = -\tfrac{1}{v_i} \ve{e_i}{\nabla f(x^k)}$ for $ i\in S_k$ and $h_i^k=0$ for $i\notin S_k$. Method 3 is known as  NSync  \cite{NSync}. For a calculus allowing the computation of closed form formulas for $v$ as a function of the sampling $\Sam$ we refer the reader to \cite{ESO}.

Note that {\bf all three methods coincide} if $|\Sam|=1$ with probability 1.

\subsection{Three linear convergence rates}

We shall now show that, putting the issue of the cost of each iteration of the three methods  aside, all enjoy a linear rate of convergence. In particular, we shall show that Method 1 has the fastest rate, followed by Method $2$ and finally, Method $3$.

\begin{theorem}\label{thm:3} Let Assumptions~\ref{ass:smoothness}, \ref{ass:strong_convexity} and \ref{ass:samplings} be satisfied.  Let $\{x^k\}_{k\geq 0}$ be the sequence of random vectors produced by Method $m$, for $m=1,2,3$ and let $x^*$ be the optimal solution of~\eqref{eq:probse}. Then
\[\Exp [f(x^{k+1})-f(x^*)] \leq (1-\sigma_m)\Exp[f(x^k)-f(x^*)],\]
where
\begin{align}
\sigma_1 &:=  \lambda_{\min}\left(\bG^{1/2}\Exp\left[ \pseudoinv{\left(\bM_{\Sam}\right)}\right]\bG^{1/2}\right), \label{eq:sigma1}\\
\sigma_2 &:=  \lambda_{\min}\left(\bG^{1/2}\bD(p)\left(\Exp\left[\bM_{\Sam}\right]\right)^{-1} \bD(p)\bG^{1/2}\right),\label{eq:sigma2}\\
\sigma_3 &:= \lambda_{\min}\left(\bG^{1/2}\bD(p)\bD(v^{-1})\bG^{1/2}\right).\label{eq:sigma3}
\end{align}

\end{theorem}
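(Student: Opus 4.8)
The plan is to analyze all three methods through a single template: convert the smoothness bound~\eqref{a:smoothness} into a bound on the expected one-step progress, and then use strong convexity~\eqref{a:strong_convexity} to turn that progress into a contraction of the suboptimality gap. Write $g:=\nabla f(x^k)$ and condition on $x^k$, so that the only remaining randomness is the sampling $S_k$, distributed as $\Sam$. Evaluating~\eqref{a:smoothness} at $x^{k+1}=x^k+h^k$ gives $f(x^{k+1})\le f(x^k)+\ve{g}{h^k}+\tfrac12\ve{\bM h^k}{h^k}$. Since for every method $h^k$ is supported on $S_k$, the quadratic term may be rewritten with $\bM_{S_k}$ in place of $\bM$ via~\eqref{eq:hhatS}, and the linear term manipulated using~\eqref{eq:hSw}. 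After taking the expectation over $\Sam$, using $\Exp[\bI_{\Sam}]=\bD(p)$ and the averaged masked Hessian $\Exp[\bM_{\Sam}]$, each bound collapses to the common form $\Exp[f(x^{k+1})\mid x^k]\le f(x^k)-\tfrac12\ve{g}{\bA_m g}$ for an explicit positive semidefinite matrix $\bA_m$.

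The three methods differ only in how $\bA_m$ appears. For Method~1 I will note that $h^k=-\pseudoinv{(\bM_{S_k})}g$ is exactly the minimizer of the right-hand side of~\eqref{a:smoothness} over vectors supported on $S_k$; substituting it and invoking the defining identity~\eqref{eq:inverses} of $\pseudoinv{(\bM_{S_k})}$ shows the one-step decrease equals $\tfrac12\ve{g}{\pseudoinv{(\bM_{S_k})}g}$, hence $\bA_1=\Exp[\pseudoinv{(\bM_{\Sam})}]$. For Method~2, setting $z=(\Exp[\bM_{\Sam}])^{-1}\bD(p)g$ and averaging term by term, the quadratic part becomes $\ve{\Exp[\bM_{\Sam}]z}{z}$ and the linear part $-\ve{\bD(p)g}{z}$; since $\Exp[\bM_{\Sam}]z=\bD(p)g$ these collapse to a net decrease $\tfrac12\ve{g}{\bA_2 g}$ with $\bA_2=\bD(p)(\Exp[\bM_{\Sam}])^{-1}\bD(p)$. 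Method~3 is identical except that the quadratic term retains a factor $\Exp[\bM_{\Sam}]$, which I bound above by the ESO inequality~\eqref{eq:ESO}, $\Exp[\bM_{\Sam}]\preceq\bD(p)\bD(v)$; this yields $\bA_3=\bD(p)\bD(v^{-1})$. In each case $\bA_m$ is exactly the matrix whose $\bG$-conjugation $\bG^{1/2}\bA_m\bG^{1/2}$ has minimal eigenvalue $\sigma_m$, as in~\eqref{eq:sigma1}--\eqref{eq:sigma3}.

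The closing step is uniform across the methods. Minimizing the strong-convexity lower bound~\eqref{a:strong_convexity} over $h$ gives the standard estimate $f(x^k)-f(x^*)\le\tfrac12\ve{g}{\bG^{-1}g}$. Writing $y=\bG^{-1/2}g$ and using the Rayleigh quotient, $\ve{g}{\bA_m g}=\ve{y}{\bG^{1/2}\bA_m\bG^{1/2}y}\ge\sigma_m\ve{y}{y}=\sigma_m\ve{g}{\bG^{-1}g}\ge 2\sigma_m\,(f(x^k)-f(x^*))$. Substituting into the per-step bound, subtracting $f(x^*)$, and taking total expectation by the tower property delivers $\Exp[f(x^{k+1})-f(x^*)]\le(1-\sigma_m)\Exp[f(x^k)-f(x^*)]$ for $m=1,2,3$.

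The only genuinely method-specific ingredient, and the main point requiring care, is verifying for each method that the linear and quadratic contributions combine into the clean form $\tfrac12\ve{g}{\bA_m g}$: for Method~1 this rests on the exact-minimization characterization of the step, while for Method~3 it is the ESO relaxation~\eqref{eq:ESO} that replaces an identity by an inequality and is the sole source of the looseness that makes $\sigma_3$ the smallest of the three rates. The remaining expectation bookkeeping---keeping track of the support constraint $h^k=\bI_{S_k}h^k$ and of $\Exp[\bI_{\Sam}]=\bD(p)$---is routine.
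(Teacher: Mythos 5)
Your proposal is correct and follows essentially the same route as the paper's own proof: the smoothness bound restricted to the sampled coordinates, collapsing each method's expected one-step decrease to $\tfrac12\ve{\nabla f(x^k)}{\bA_m \nabla f(x^k)}$ with the appropriate matrix $\bA_m$, the eigenvalue/Rayleigh-quotient bound against $\bG^{-1}$, and the strong-convexity estimate $f(x^k)-f(x^*)\le\tfrac12\ve{\nabla f(x^k)}{\bG^{-1}\nabla f(x^k)}$. The only cosmetic difference is that for Method 2 you verify the collapse by direct substitution of $z=(\Exp[\bM_{\Sam}])^{-1}\bD(p)\nabla f(x^k)$ rather than invoking the minimizer characterization, which yields the identical bound.
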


The above result means that the number of iterations sufficient for Method $m$ to obtain an $\epsilon$-solution (in expectation) is $O(\tfrac{1}{\sigma_m}\log(1/\epsilon))$. 

In the above theorem (which we prove in Section~\ref{sec:proof_3_methods}), $\lambda_{min}(\bX)$ refers to the smallest eigenvalue of matrix $\bX$. It turns out that in all three cases, the matrix $\bX$ involved is positive definite. However, for the matrices in  \eqref{eq:sigma1} and \eqref{eq:sigma2} this will only be apparent if we show that $\Exp[\bM_{\Sam}]\succ 0$ and $\Exp[\pseudoinv{(\bM_{\Sam})}] \succ 0$, which we shall do next.

\begin{lemma}\label{lem:1}
If $\Sam$ is a proper sampling, then
$\Exp\left[\bM_{\Sam}\right]\succ 0$.
\end{lemma}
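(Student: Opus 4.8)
The plan is to show that the quadratic form $h^\top \Exp[\bM_{\Sam}] h$ is strictly positive for every nonzero $h\in\R^n$. First I would observe that $\bM_{\Sam}\succeq 0$ for every realization of $\Sam$: by \eqref{eq:hhatS}, for any fixed set $S$ and any $h$ we have $h^\top \bM_S h = (h_S)^\top \bM h_S \geq 0$ since $\bM\succ 0$. Taking expectation and using linearity, $h^\top \Exp[\bM_{\Sam}] h = \Exp[(h_{\Sam})^\top \bM h_{\Sam}] \geq 0$, so $\Exp[\bM_{\Sam}]\succeq 0$ is immediate. The entire content of the lemma is therefore the strictness, i.e.\ ruling out a nonzero vector in the kernel.

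To get strictness, fix $h\neq 0$ and pick an index $j$ with $h_j\neq 0$. Since $\bM\succ 0$, for each realized $S$ we can lower-bound $(h_S)^\top \bM h_S \geq \lambda_{\min}(\bM)\|h_S\|^2$, a nonnegative quantity. Restricting the expectation to the event $\{j\in\Sam\}$ and noting that on this event $\|h_{\Sam}\|^2 \geq h_j^2$, I would estimate
\[ h^\top \Exp[\bM_{\Sam}] h = \Exp\left[(h_{\Sam})^\top \bM h_{\Sam}\right] \geq \lambda_{\min}(\bM)\,\Exp\left[\|h_{\Sam}\|^2\right] \geq \lambda_{\min}(\bM)\,h_j^2\,\Prob(j\in\Sam). \]
By properness of the sampling, $\Prob(j\in\Sam)=p_j>0$, and $\lambda_{\min}(\bM)>0$ because $\bM\succ 0$; hence the right-hand side is strictly positive. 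This proves $h^\top \Exp[\bM_{\Sam}] h > 0$ for all $h\neq 0$, i.e.\ $\Exp[\bM_{\Sam}]\succ 0$.

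The only subtlety---and really the one place the hypothesis is used---is that properness guarantees, for the particular coordinate $j$ on which $h$ does not vanish, that $j$ lands in the sampled set with positive probability. Without this, a coordinate could be invisible to $\Sam$ and the corresponding direction would lie in the kernel of $\Exp[\bM_{\Sam}]$. No structural assumption on the joint distribution of $\Sam$ beyond the positivity of the marginals $p_j$ is needed, which is why the argument works for an arbitrary proper sampling.
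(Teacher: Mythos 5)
Your proof is correct. It rests on the same two pillars as the paper's argument---the identity $h^\top \bM_S h=(h_S)^\top \bM h_S$ together with $\bM\succ 0$, and the properness of $\Sam$ applied to a coordinate $j$ in the support of $h$---but the execution differs: the paper argues qualitatively by contradiction, showing that $h^\top \Exp[\bM_{\Sam}]h=0$ forces $\Prob(\support\{h\}\cap\Sam=\emptyset)=1$ and hence $h=0$, whereas you give a direct quantitative estimate $h^\top \Exp[\bM_{\Sam}]h\geq \lambda_{\min}(\bM)\,p_j h_j^2$. Your route actually proves slightly more, namely the explicit operator bound $\Exp[\bM_{\Sam}]\succeq \lambda_{\min}(\bM)\,\bD(p)\succ 0$ (since $\Exp[\|h_{\Sam}\|^2]=\sum_i p_i h_i^2$), which could be of independent use; the paper's version avoids any reference to eigenvalues of $\bM$ and isolates more transparently that the only way positive definiteness can fail is for some coordinate to be invisible to the sampling. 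Both are complete and use the properness hypothesis in exactly the same place.
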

\begin{proof}
Denote $\support \{x\}=\{i\in [n]: x_i\neq 0\}$.
Since $\bM\succ 0$, any principal submatrix of $\bM$ is also positive definite. Hence  for any $x\in \R^n \backslash \{0\}$, 
$x^\top \bM_{S} x=0$ implies that $\support\{x\} \cap S=\emptyset$ for all $S\subseteq [n]$.  If $x\in \R^n $ is such that
$$
\textstyle x^\top \Exp\left[\bM_{\Sam}\right] x= \sum_{S\subseteq [n]} \Prob(\Sam=S) x^\top \bM_{S} x=0,
$$
then $\Prob(\support\{x\} \cap \Sam=\emptyset)=1$. Since $\Sam$ is proper, this only happens 
when $x=0$. Therefore, $\Exp[\bM_{\Sam}]\succ 0$.
\end{proof}

\begin{lemma}\label{l-dzeff} If $\Sam$ is proper and nonvacuous,
then 
 \begin{align}\label{a-dzeff}
0\prec \bD(p) \pseudoinv{\left(\Exp \left[ \bM_{\Sam}\right]\right) } \bD(p)  \preceq 
\Exp\left[\pseudoinv{\left(\bM_{\Sam} \right) }\right]
.\end{align}
\end{lemma}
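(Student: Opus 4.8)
The plan is to establish the two inequalities separately. The left-hand inequality $0\prec\bD(p)\pseudoinv{(\Exp[\bM_{\Sam}])}\bD(p)$ is immediate: Lemma~\ref{lem:1} gives $\Exp[\bM_{\Sam}]\succ0$, hence $\pseudoinv{(\Exp[\bM_{\Sam}])}\succ0$, and since $\Sam$ is proper we have $p_i>0$ for all $i$, so $\bD(p)$ is symmetric and invertible; congruence by $\bD(p)$ then preserves positive definiteness. All the real work is in the right-hand inequality.

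For the right-hand inequality, the plan is to use the variational (Legendre/Fenchel) representation of quadratic forms: for any $\bA\succ0$ and any $b$, $\ve{b}{\pseudoinv{\bA}b}=\max_{x}\,(2\ve{b}{x}-\ve{x}{\bA x})$, with maximizer $x=\pseudoinv{\bA}b$. I would fix an arbitrary $y\in\R^n$, set $b=\bD(p)y$, and apply this on both sides. The first key observation is that $\bD(p)=\Exp[\bI_{\Sam}]$, because $p_i=\Prob(i\in\Sam)=\Exp[(\bI_{\Sam})_{ii}]$. Using~\eqref{eq:hhatS}--\eqref{eq:hSw} in the forms $\ve{x}{\bM_S x}=\ve{\bI_S x}{\bM\,\bI_S x}$ and $\ve{\bD(p)y}{u}=\Exp[\ve{y}{\bI_{\Sam}u}]$, I can rewrite the left-hand quadratic form as
\[
\ve{\bD(p)y}{\pseudoinv{(\Exp[\bM_{\Sam}])}\bD(p)y}=\max_{u\in\R^n}\Exp\big[2\ve{y}{\bI_{\Sam}u}-\ve{\bI_{\Sam}u}{\bM\,\bI_{\Sam}u}\big].
\]
On the right-hand side, the same representation applied blockwise and using the embedded-inverse convention~\eqref{eq:inverses} gives, for each realization $S$, $\ve{y}{\pseudoinv{(\bM_S)}y}=\max_{w:\,\support(w)\subseteq S}\,(2\ve{y}{w}-\ve{w}{\bM w})$.

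The decisive step is then a comparison between a \emph{non-adaptive} and an \emph{adaptive} maximization. For a \emph{fixed} $u$, the vector $\bI_{\Sam}u$ is always supported on $\Sam$, hence it is a feasible point in the inner problem defining $\ve{y}{\pseudoinv{(\bM_{\Sam})}y}$; so, realization by realization, $2\ve{y}{\bI_{\Sam}u}-\ve{\bI_{\Sam}u}{\bM\,\bI_{\Sam}u}\le \ve{y}{\pseudoinv{(\bM_{\Sam})}y}$. Taking expectations and then the maximum over $u$ yields
\[
\ve{\bD(p)y}{\pseudoinv{(\Exp[\bM_{\Sam}])}\bD(p)y}\le\Exp\big[\ve{y}{\pseudoinv{(\bM_{\Sam})}y}\big]=\ve{y}{\Exp[\pseudoinv{(\bM_{\Sam})}]\,y},
\]
which is exactly the claimed matrix inequality, since $y$ was arbitrary. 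Intuitively, the left-hand side must commit to a single $u$ that works only in expectation across samplings, whereas the right-hand side may choose the best $w$ separately for each $\Sam$; this is precisely $\max\Exp\le\Exp\max$.

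I expect the main obstacle to be bookkeeping rather than depth: correctly translating the embedded-inverse convention~\eqref{eq:inverses} into the blockwise variational formula (ensuring the feasible set is exactly $\{w:\support(w)\subseteq\Sam\}$ and that $\pseudoinv{(\bM_{\Sam})}$ vanishes off the $\Sam$-block), and verifying the rewritings $\bD(p)=\Exp[\bI_{\Sam}]$ and $\ve{u}{\bM_{\Sam}u}=\ve{\bI_{\Sam}u}{\bM\,\bI_{\Sam}u}$. Nonvacuousness of $\Sam$ is used only to guarantee that $\pseudoinv{(\bM_{\Sam})}$ is taken over a nonempty block with probability one, so that the adaptive inner maxima are attained and the final expectation is well defined.
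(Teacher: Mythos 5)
Your proposal is correct and follows essentially the same route as the paper: the left inequality via Lemma~\ref{lem:1} and $\bD(p)\succ 0$, and the right inequality via the variational representation $\tfrac12 h^\top \pseudoinv{\bA} h=\max_x\,(\ve{x}{h}-\tfrac12 x^\top\bA x)$, evaluating at a fixed (non-adaptive) point, taking expectations, and only then maximizing — i.e.\ the same $\max\Exp\le\Exp\max$ argument, with $\bD(p)=\Exp[\bI_{\Sam}]$ used to identify the resulting conjugate as $\tfrac12 h^\top\bD(p)\pseudoinv{(\Exp[\bM_{\Sam}])}\bD(p)h$.
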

\begin{proof}
The first inequality follows from Lemma~\ref{lem:1} and the fact for proper $\Sam$ we have $p>0$ and hence $\bD(p)\succ 0$.
We now turn to the second inequality. Fix $h\in\R^n$. For arbitrary $\emptyset\neq  S\subseteq [n]$ and $y\in \R^n$ we have:
\begin{gather*}
\tfrac{1}{2}h^\top \pseudoinv{\left(\bM_{S} \right) } h =\tfrac{1}{2}h_{S}^\top  \pseudoinv{\left(\bM_{S} \right)} h_{S}\\ 
= \max_{x\in \R^n}
\ve{x}{h_{S}}-\tfrac{1}{2} x^\top \bM_{S} x \geq  \ve{y}{h_{S}}-\tfrac{1}{2} y^\top \bM_{S} y.
\end{gather*}
Substituting $S=\Sam$ and taking expectations, we obtain
\begin{gather*}
\tfrac{1}{2}\Exp\left[h^\top \pseudoinv{\left(\bM_{\Sam} \right)} h\right] \geq
\Exp\left[\ve{y}{h_{\Sam}}-\tfrac{1}{2} y^\top \bM_{\Sam} y \right]
\\
=y^\top \bD(p) h-\tfrac{1}{2}y^\top \Exp\left[\bM_{\Sam}\right] y.
\end{gather*}
Therefore, $
 \tfrac{1}{2}h^\top\Exp\left[ \pseudoinv{\left(\bM_{\Sam} \right)} \right] h \geq
\max_{y\in  \R^n } y^\top \bD(p) h -\tfrac{1}{2}y^\top \Exp\left[\bM_{\Sam}\right] y 
= \tfrac{1}{2} h^\top \bD(p) \pseudoinv{\left(\Exp \left[ \bM_{\Sam}\right]\right)}\bD(p) h.$\qedhere
\end{proof}

We now establish an important relationship between the quantities $\sigma_1,\sigma_2$ and $\sigma_3$, which sheds light on the convergence rates of the three methods.

\begin{theorem}\label{thm:3_rates} $0<\sigma_3 \leq \sigma_2 \leq  \sigma_1\leq 1$.
\end{theorem}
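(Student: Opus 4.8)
The plan is to prove the four comparisons $0<\sigma_3$, $\sigma_3\le\sigma_2$, $\sigma_2\le\sigma_1$ and $\sigma_1\le 1$ separately, each time reducing a Loewner-order inequality between the three ``inner'' matrices to an inequality between their smallest eigenvalues. Throughout I would use three elementary and standard facts about the positive semidefinite order: (i) conjugation preserves it, i.e.\ $\bA\preceq \bB$ implies $R^\top \bA R\preceq R^\top \bB R$ for any matrix $R$; (ii) inversion reverses it on positive definite matrices, i.e.\ $0\prec \bA\preceq \bB$ implies $\bB^{-1}\preceq \bA^{-1}$; and (iii) $\lambda_{\min}$ is monotone, i.e.\ $\bA\preceq \bB$ implies $\lambda_{\min}(\bA)\le\lambda_{\min}(\bB)$, which is immediate from the Courant--Fischer formula $\lambda_{\min}(\bA)=\min_{\|w\|=1}w^\top \bA w$. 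I also record the one consequence of Assumptions~\ref{ass:smoothness} and~\ref{ass:strong_convexity} that I need: subtracting \eqref{a:strong_convexity} from \eqref{a:smoothness} gives $\ve{\bG h}{h}\le\ve{\bM h}{h}$ for all $h$, i.e.\ $\bG\preceq\bM$, and hence $\bM^{-1}\preceq\bG^{-1}$ by (ii).

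Two of the three comparisons are then just conjugations. For $\sigma_2\le\sigma_1$ I would take the second inequality of Lemma~\ref{l-dzeff}, conjugate it by $\bG^{1/2}$ via (i), and apply (iii). For $\sigma_3\le\sigma_2$ I would start from the ESO inequality~\eqref{eq:ESO}, $\Exp[\bM_{\Sam}]\preceq \bD(p)\bD(v)$, invert it via (ii) to get $\bD(p^{-1})\bD(v^{-1})=(\bD(p)\bD(v))^{-1}\preceq (\Exp[\bM_{\Sam}])^{-1}$, conjugate by $\bD(p)$ (using that diagonal matrices commute, so the left-hand side collapses to $\bD(p)\bD(v^{-1})$), then conjugate by $\bG^{1/2}$, and again apply (iii). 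The strict positivity $0<\sigma_3$ is purely a matter of positive definiteness: $\bD(p)\succ 0$ since the sampling is proper, and $\bD(v^{-1})\succ 0$ because the diagonal of~\eqref{eq:ESO} forces $p_iv_i\ge p_i\bM_{ii}>0$ and hence $v_i>0$; conjugating the positive definite diagonal matrix $\bD(p)\bD(v^{-1})$ by the invertible $\bG^{1/2}$ keeps it positive definite, so its smallest eigenvalue is strictly positive.

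The only comparison requiring a genuine idea is $\sigma_1\le 1$, and by (iii) together with $\bM^{-1}\preceq\bG^{-1}$ it suffices to prove the clean Loewner bound $\Exp[\pseudoinv{(\bM_{\Sam})}]\preceq \bM^{-1}$; chaining then gives $\bG^{1/2}\Exp[\pseudoinv{(\bM_{\Sam})}]\bG^{1/2}\preceq\bG^{1/2}\bM^{-1}\bG^{1/2}\preceq\bG^{1/2}\bG^{-1}\bG^{1/2}=\bI$, whence $\sigma_1\le\lambda_{\min}(\bI)=1$ by (iii). To prove $\pseudoinv{(\bM_{S})}\preceq\bM^{-1}$ for every fixed $\emptyset\neq S\subseteq[n]$ (after which I take expectations using that $\preceq$ is preserved under averaging), I would reuse the dual/variational identity already exploited in the proof of Lemma~\ref{l-dzeff}: for any $u\in\R^n$,
\[
\tfrac12 u^\top \pseudoinv{(\bM_{S})}u=\max_{x\in\R^n}\Big(\ve{x}{u_S}-\tfrac12 x^\top \bM_S x\Big)=\max_{z:\,\support\{z\}\subseteq S}\Big(\ve{z}{u}-\tfrac12 z^\top \bM z\Big),
\]
where the second equality uses $\ve{x}{u_S}=\ve{x_S}{u}$ and $x^\top \bM_S x=x_S^\top \bM x_S$ from~\eqref{eq:hhatS}--\eqref{eq:hSw} together with the fact that $x\mapsto x_S$ surjects onto the vectors supported on $S$. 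Dropping the support constraint only enlarges the feasible set, so this is at most $\max_{z\in\R^n}(\ve{z}{u}-\tfrac12 z^\top\bM z)=\tfrac12 u^\top\bM^{-1}u$; since $u$ is arbitrary, this is exactly $\pseudoinv{(\bM_{S})}\preceq\bM^{-1}$.

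The main obstacle is isolating that last bound: the averaged-inverse matrix $\Exp[\pseudoinv{(\bM_{\Sam})}]$ is not comparable to $\bM^{-1}$ through any entrywise or eigenvalue interlacing of the submatrices $\bM_{\Sam}$, and a naive attempt to bound an expectation of a maximum from above fails. The right move is to rewrite each $\pseudoinv{(\bM_{S})}$ as the maximization of a \emph{single} fixed concave quadratic over the coordinate subspace indexed by $S$; monotonicity of the maximum under enlarging the feasible set then yields the bound for every realization $S$ simultaneously, and it survives taking expectations by linearity. Everything else is bookkeeping with the three order facts above.
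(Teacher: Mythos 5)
Your proof is correct. For the chain $0<\sigma_3\le\sigma_2\le\sigma_1$ you follow essentially the same route as the paper: the Loewner chain $\bD(p)\bD(v^{-1})\preceq \bD(p)\left(\Exp[\bM_{\Sam}]\right)^{-1}\bD(p)\preceq \Exp[\pseudoinv{(\bM_{\Sam})}]$ obtained from \eqref{eq:ESO} and Lemma~\ref{l-dzeff}, conjugated by $\bG^{1/2}$ and passed through the monotonicity of $\lambda_{\min}$ under the semidefinite order. The one place where you genuinely diverge is the bound $\sigma_1\le 1$: the paper disposes of it in one line by noting that $\sigma_m>1$ would contradict the nonnegativity of $\Exp[f(x^{k+1})-f(x^*)]$ in Theorem~\ref{thm:3}, i.e.\ it is treated as an \emph{algorithmic} corollary, whereas you prove the purely linear-algebraic statement $\pseudoinv{(\bM_{S})}\preceq\bM^{-1}$ for every nonempty $S$ (by dropping the support constraint in the same variational identity that drives the proof of Lemma~\ref{l-dzeff}), average over $S$, and combine with $\bG\preceq\bM$, which you correctly extract from Assumptions~\ref{ass:smoothness} and~\ref{ass:strong_convexity}. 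Your version is self-contained --- it does not presuppose the convergence theorem --- and it yields the stronger matrix inequality $\bG^{1/2}\Exp[\pseudoinv{(\bM_{\Sam})}]\bG^{1/2}\preceq\bI$ rather than only the scalar eigenvalue bound; the paper's version is shorter and covers all three $\sigma_m$ at once with no extra algebra. Both arguments are valid, and your derivation of $v_i>0$ from the diagonal of \eqref{eq:ESO} is a small but legitimate detail the paper leaves implicit.
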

\begin{proof} We have $\sigma_m>0$ for all $m$ since $\sigma_m$ is the smallest eigenvalue of a positive definite matrix. That $\sigma_m\leq 1$ follows as a direct corollary Theorem~\ref{thm:3}. Finally,
$\bD(p)\bD(v^{-1}) =
\bD(p)\bD(p^{-1})\bD(v^{-1}) \bD(p)  \overset{\eqref{eq:ESO}}{\preceq} \bD(p)\left(\Exp \left[ \bM_{\Sam}\right]\right)^{-1}\bD(p)\overset{ \eqref{a-dzeff}}{\preceq} \Exp\left[ \pseudoinv{\left(\bM_{\Sam} \right)}\right].
$
\end{proof}

\subsection{Example}

Consider the function $f: \R^3\to \R$ given by
$$f(x)=\tfrac{1}{2}x^T \bM x, \quad \scriptsize
\bM=\begin{pmatrix}
 1.0000 & 0.9900 & 0.9999 \\
 0.9900 & 1.0000 & 0.9900 \\
 0.9999 & 0.9900 & 1.0000
                   \end{pmatrix}.
$$
Note that Assumption~\ref{ass:smoothness} holds, and Assumption~\ref{ass:strong_convexity} holds with $\bG=\bM$.
Let $\Sam$ be the ``$2$-nice sampling'' on $[n]=\{1,2,3\}$. That is, we set
$
\Prob(\Sam=\{i,j\})=\frac{1}{3}.
$ for $(i,j) = (1,2), (2,3), (3,1)$.
A straightforward calculation reveals that:
\[\scriptsize
\Exp\left[ \pseudoinv{\left(\bM_{\Sam} \right) }\right]\approx\begin{pmatrix}
1683.50  & -16.58 & -1666.58 \\
  -16.58 &  33.50 &  -16.58 \\
  -1666.58 &  -16.58 &   1683.50
                   \end{pmatrix},
\]
\[
\scriptsize \bD(p)\left(\Exp \left[ \bM_{\Sam}\right]\right)^{-1}\bD(p) \approx \begin{pmatrix}
 0.9967 &  -0.3268  & -0.3365 \\
   -0.3268  &  0.9902   & -0.3268\\
   -0.3365 &  -0.3268 &   0.9967
                   \end{pmatrix}.
\]

It can be verified that \eqref{eq:ESO} holds with $v=(2,2,2)$; see \cite{PCDM} or \cite{ESO}. Therefore, $\bD(p)\bD(v^{-1})=\tfrac{1}{3}\bI$. Finally, we obtain:
\[\sigma_1 \approx 0.3350, \; \sigma_2 \approx 1.333\cdot 10^{-4}, \;
 \sigma_2 \approx  0.333\cdot 10^{-4}.
\]

Note that:  theoretical rate, $\sigma_1$, of Method 1 is {\bf 10,000 times better} than the rate, $\sigma_3$, of parallel coordinate descent (Method 3).


\subsection{Proof of Theorem~\ref{thm:3}}\label{sec:proof_3_methods}
\begin{proof}
By minimizing both sides of  \eqref{a:strong_convexity}  in $h$, we get:
\begin{align}\label{a-ff*}
f(x)-f(x^*)\leq \frac{1}{2}\<\nabla f(x), \bG^{-1}\nabla f(x)>.
\end{align}
In view of  \eqref{a:smoothness} and \eqref{eq:hhatS}, for  for all $h\in\R^n$ we have:
\begin{equation}\label{eq:09809}
f(x^k+\bI_{S_k} h) \leq f(x^k)+\ve{\nabla f(x^k)}{\bI_{S_k} h} +\frac{1}{2}\ve{\bM_{S_k} h}{ h}.
\end{equation}
{\em Method 1:} If we use \eqref{eq:09809} with $h\leftarrow h^k := - \pseudoinv{(\bM_{S_k})}\nabla f(x^k)$, and apply \eqref{eq:inverses}, we get:
\[f(x^{k+1})\leq f(x^k)-\frac{1}{2}\<\nabla f(x^k),\pseudoinv{(\bM_{S_k})} \nabla f(x^k)>.\]
Taking expectations on both sides with respect to $S_k$ yields:
\begin{eqnarray*}
& &\Exp_k[f(x^{k+1})]\\&\leq & f(x^k)-\frac{1}{2}\ve{\nabla f(x^k)}{\Exp [\pseudoinv{\left(\bM_{\Sam}\right)}]\nabla f(x^k)}\\
& \overset{\eqref{eq:sigma1}}{\leq}  & f(x^k)-\frac{\sigma_1}{2}\<\nabla f(x^k),\bG^{-1}\nabla f(x^k)>\\
& \overset{\eqref{a-ff*}}{\leq} & f(x^k)-\sigma_1 \left(f(x^k)-f(x^*)\right),
\end{eqnarray*}
where $\Exp_k$ denotes the expectation with respect to $S_k$.
It remains to rearrange the inequality and take expectation.

{\em Method 2:} Let $\bD=\bD(p)$. Taking expectations on both sides of \eqref{eq:09809} with respect to $S_k$, we see that for all $h\in \R^n$ the following holds:
$
\Exp_k [f(x^k+\bI_{S_k} h)] \leq f(x^k)+\ve{\bD \nabla f(x^k)}{ h} 
 +\frac{1}{2}\ve{\Exp[\bM_{S_k}] h}{ h}.
$ Note that the choice $\tilde{h}^k:= - (\Exp [\bM_{\Sam}])^{-1} \bD \nabla f(x^k)$ minimizes the right hand side of the inequality in $h$. Since $h^k = \bI_{S_k} \tilde{h}^k$, 
\begin{eqnarray*}
&& \Exp_k [ f(x^{k+1}) ] \\
&\leq& f(x^k) - \frac{1}{2} \ve{\nabla f(x^k)}{\bD \left(\Exp[\bM_{\Sam}]\right)^{-1} \bD \nabla f(x^k)}\\
& \overset{\eqref{eq:sigma2}}{\leq}  & f(x^k)-\frac{\sigma_2}{2}\<\nabla f(x^k),\bG^{-1}\nabla f(x^k)>\\
& \overset{\eqref{a-ff*}}{\leq} & f(x^k)-\sigma_2 \left(f(x^k)-f(x^*)\right).
\end{eqnarray*}

{\em Method 3:} The proof is the same as that for Method 2, except in the first inequality we replace $\Exp[\bM_{S_k}]$ by $\bD(p)\bD(v)$ (see \eqref{eq:ESO}).
\end{proof}

\section{Minimization of a  Composite Function} \label{sec:proximal}

In this section we consider the following {\em composite} minimization problem:
\begin{equation}\label{eq:probse1}
 \min_{x\in \R^n}   F(x)\equiv f(x)+\sum_{i=1}^n \psi_i(x_i).
\end{equation}
We assume that $f$ satisfies Assumptions~\ref{a:smoothness} and  \ref{a:strong_convexity}. The difference from the setup in the previous section is in the inclusion of the separable term $\sum_i \psi_i$. 

\begin{assumption} \label{ass:composite} For each $i$,  $\psi_i:\R\rightarrow \R \cup\{+\infty\}$ is closed and $\gamma_i$-strongly convex for some $\gamma_i \geq 0$. Let $\gamma=(\gamma_1,\dots,\gamma_n)\in \R_+^n$. 
\end{assumption}

For ease of presentation, in this section we only consider  \textit{uniform sampling} $\Sam$, which means that $\Prob(i\in \Sam) = \Prob(j\in \Sam)$ for all $i,j\in [n]$. In particular, this implies that
$\Prob(i\in \Sam)=\frac{\Exp[|\Sam|]}{n}$ for all $i$. Let $\tau:=\Exp[\Sam]$.

\subsection{New algorithm}

We now propose  Algorithm~\ref{alg:a1}, which a variant of Method 1 applicable to problem~\eqref{eq:probse1}. If $\psi_i\equiv 0$ for all $i$, the methods coincide. The following result states that the method converges at a geometric rate, in expectation.

\begin{algorithm}[ht!]
\caption{Proximal version of Method 1}
\label{alg:a1}
\begin{algorithmic}[1]
\STATE \textbf{Parameters:}  uniform sampling $\Sam$
 \STATE \textbf{Initialization:} choose initial point $x^0\in\R^n$
\FOR {$k=0,1,2,\dots$}
 \STATE Generate a random set of blocks $S_k\sim \Sam $
 \STATE Compute: $h^k=\argmin_{h\in \R^n} \ve{\nabla f(x^k)}{h_{S_k}}+\frac{1}{2}\ve{h}{\bM_{S_k} h} +\sum_{i\in S_k}\psi_i(x_i^k+h_i)
$
 \STATE Update: $x^{k+1}:= x^k+ h^k_{S_k}$
\ENDFOR
\end{algorithmic}
\end{algorithm}

\begin{theorem}\label{prop-proximal} Let Assumptions~\ref{ass:smoothness}, \ref{ass:strong_convexity}, \ref{ass:samplings} and \ref{ass:composite} be satisfied. Then the output sequence $\{x^k\}_{k\geq 0}$ of 
Algorithm~\ref{alg:a1} satisfies:
$$
\Exp[F(x^{k+1})-F(x^*)]\leq (1-\sigma^{prox}_1)\Exp[F(x^k)-F(x^*)],
$$
where $x^*$ is the solution of~\eqref{eq:probse1}, $\sigma^{prox}_1:=\frac{\tau \min(1,s_1)}{n}$ and
$$
s_1:=\lambda_{\min} \left[ \left(\frac{n}{\tau}\Exp[ \bM_{\Sam}]+ \bD(\gamma)\right)^{-1}(\bD(\gamma)+\bG) \right].
$$
\end{theorem}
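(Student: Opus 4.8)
The plan is to combine the smoothness bound with the optimality of the subproblem solution evaluated at a well-chosen feasible test point, and then take expectations. First I would observe that since the update $x^{k+1}=x^k+h^k_{S_k}$ changes only the coordinates in $S_k$, combining the smoothness inequality \eqref{eq:09809}, namely $f(x^{k+1})\le f(x^k)+\ve{\nabla f(x^k)}{h^k_{S_k}}+\tfrac12\ve{\bM_{S_k}h^k}{h^k}$, with the fact that $h^k$ minimizes the Step-5 subproblem objective $\phi_S(h):=\ve{\nabla f(x^k)}{h_S}+\tfrac12\ve{h}{\bM_S h}+\sum_{i\in S}\psi_i(x_i^k+h_i)$ yields, for every $h\in\R^n$,
$$F(x^{k+1})\le f(x^k)+\sum_{i\notin S_k}\psi_i(x_i^k)+\phi_{S_k}(h).$$

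The key step is to substitute the test point $h=t(x^*-x^k)$ for a parameter $t\in[0,1]$ to be fixed later, and to bound the regularizer terms via $\gamma_i$-strong convexity of $\psi_i$ (Assumption~\ref{ass:composite}) in the interpolation form $\psi_i((1-t)x_i^k+tx_i^*)\le(1-t)\psi_i(x_i^k)+t\psi_i(x_i^*)-\tfrac{\gamma_i}{2}t(1-t)(x_i^*-x_i^k)^2$, which is valid precisely because $t\in[0,1]$. I would then take the conditional expectation over $S_k$. Uniformity gives $\Prob(i\in\Sam)=\tau/n$ for every $i$, hence $\Exp[\bI_{S_k}]=\tfrac{\tau}{n}\bI$, $\Exp[\bM_{S_k}]=\Exp[\bM_{\Sam}]$, and $\Exp[\sum_{i\in S_k}\gamma_i(x_i^*-x_i^k)^2]=\tfrac{\tau}{n}\ve{d}{\bD(\gamma)d}$ with $d:=x^*-x^k$. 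A short bookkeeping of the $\psi$-terms (writing $\Psi:=\sum_i\psi_i$) shows the surviving regularizer mass is $(1-\tfrac{t\tau}{n})\Psi(x^k)+\tfrac{t\tau}{n}\Psi(x^*)$, so that $f(x^k)+(1-\tfrac{t\tau}{n})\Psi(x^k)=F(x^k)-\tfrac{t\tau}{n}\Psi(x^k)$.

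Next I would eliminate the gradient inner product using strong convexity of $f$ (Assumption~\ref{ass:strong_convexity}) in the form $\ve{\nabla f(x^k)}{d}\le f(x^*)-f(x^k)-\tfrac12\ve{d}{\bG d}$, which is legitimate since its coefficient $\tfrac{t\tau}{n}$ is positive; collecting the function-value terms produces exactly $-\tfrac{t\tau}{n}(F(x^k)-F(x^*))$. What remains is the quadratic form $Q:=\tfrac{t}{2}[-\tfrac{\tau}{n}\ve{d}{\bG d}+t\ve{d}{\Exp[\bM_{\Sam}]d}-(1-t)\tfrac{\tau}{n}\ve{d}{\bD(\gamma)d}]$, and the heart of the argument is to choose $t$ so that $Q\le0$ for all $d$. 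Since $t\ge0$, this is equivalent to the semidefinite inequality $t(\tfrac{n}{\tau}\Exp[\bM_{\Sam}]+\bD(\gamma))\preceq\bD(\gamma)+\bG$. Writing $A:=\tfrac{n}{\tau}\Exp[\bM_{\Sam}]+\bD(\gamma)$, which is positive definite by Lemma~\ref{lem:1}, and $B:=\bD(\gamma)+\bG\succ0$, the inequality $tA\preceq B$ is equivalent to $t\bI\preceq A^{-1/2}BA^{-1/2}$, so the largest admissible $t$ is $\lambda_{\min}(A^{-1/2}BA^{-1/2})=\lambda_{\min}(A^{-1}B)=s_1>0$. Imposing also $t\le1$ (forced by the strong-convexity interpolation), the choice $t=\min(1,s_1)$ makes $Q\le0$ and gives $\Exp_k[F(x^{k+1})-F(x^*)]\le(1-\tfrac{\tau}{n}\min(1,s_1))(F(x^k)-F(x^*))=(1-\sigma^{prox}_1)(F(x^k)-F(x^*))$; taking full expectation and using the tower property finishes the proof.

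I expect the main obstacle to be the bookkeeping that makes the factor $n/\tau$ and the matrix $A$ emerge cleanly: tracking the $\psi$-terms through the expectation so that the $(1-\tfrac{t\tau}{n})\Psi(x^k)$ mass combines with $f(x^k)$ into $F(x^k)$, and recognizing that the residual quadratic collapses to the single generalized-eigenvalue condition $tA\preceq B$. The one subtlety that is easy to overlook is the cap $t\le1$, which produces $\min(1,s_1)$ rather than $s_1$ in the rate and follows directly from the range of validity of the convex-combination form of strong convexity.
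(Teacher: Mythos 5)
Your proposal is correct and follows essentially the same route as the paper's own proof: the same smoothness-plus-optimality bound, the same test point $t(x^*-x^k)$ with $t\in[0,1]$, the same strong-convexity interpolation for the $\psi_i$, the same use of Assumption~\ref{ass:strong_convexity} to absorb the gradient term, and the same final choice $t=\min(1,s_1)$. The only difference is that you spell out explicitly why this choice of $t$ kills the residual quadratic (the condition $t(\tfrac{n}{\tau}\Exp[\bM_{\Sam}]+\bD(\gamma))\preceq\bD(\gamma)+\bG$), a step the paper compresses into ``it remains to replace $\lambda$ by $\min(1,s)$.''
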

Note for positive definite matrices $\bX,\bY$, we have $\lambda_{\min}(\bX^{-1}\bY) = \lambda_{\min}(\bY^{1/2}\bX^{-1}\bY^{1/2})$. It is this latter form we have used in the formulation of Theorem~\ref{thm:3}. In the special case when $\gamma\equiv 0$ ($\psi_i$ are merely convex), we have $\sigma^{prox}_1 =\min\{\tfrac{\tau}{n},\tfrac{\tau^2}{n^2}\lambda_{\min}(\bG^{1/2}(\Exp[\bM_{\Sam}])^{-1}\bG^{1/2})\}$. Note that while this rate applies to a proximal/composite variant of Method 1, its rate is best compared to the rate $\sigma_2$ of Method 2. Indeed,  looking at \eqref{eq:sigma2}, and realizing that for uniform $\Sam$ we have $\bD(p)=\tfrac{\tau}{n}\bI$, we get \[\sigma_1\geq \sigma_2 = \tfrac{\tau^2}{n^2}\lambda_{\min}(\bG^{1/2}(\Exp[\bM_{\Sam}])^{-1}\bG^{1/2})\geq \sigma_1^{prox}.\] 
So, the rate we can prove for the composite version of Method 1 ($\sigma^{prox}_1$) is weaker than the rate we get for Method 2 ($\sigma_2$), which by Theorem~\ref{thm:3_rates} is weaker than the rate  of Method 1 ($\sigma_1$). We believe this is a byproduct of our analysis  rather than the weakness of Algorithm~\ref{alg:a1}.

\subsection{PCDM}

We will now compare our new Algorithm~\ref{alg:a1} with the Parallel Coordinate Descent Method (PCDM) of \citet{PCDM}, which can also be applied to problem \eqref{eq:probse1}.

\begin{algorithm}[ht!]
\caption{PCDM~\cite{PCDM}}
\label{alg:pcdm}
\begin{algorithmic}[1]
\STATE \textbf{Parameters:}  uniform sampling $\Sam$; $v\in \R^n_{++}$
 \STATE \textbf{Initialization:} choose initial point $x^0\in\R^n$
\FOR {$k=0,1,2,\dots$}
 \STATE Generate a random set of blocks $S_k\sim \Sam $
 \STATE Compute for $i \in S_k$
 
 $
h_i^k=\displaystyle\argmin_{h_i\in \R} e_i^\top \nabla f(x^k) h_i +\frac{v_i}{2}|h_i|^2+\psi_i(x_i^k+h_i)
$
 \STATE Update: $x^{k+1}:= x^k+ \sum_{i\in S_k} h_i^k e_i$
\ENDFOR
\end{algorithmic}
\end{algorithm}


\begin{proposition} Let the same assumptions as those in Theorem \ref{prop-proximal} be satisfied. Moreover, assume $v\in \R_{++}^n$ is a vector satisfying \eqref{eq:ESO}.
Then the output sequence $\{x^k\}_{k\geq 0}$ of Algorithm~\ref{alg:pcdm} satisfies
$$
\Exp[F(x^{k+1})-F(x^*)]\leq (1- \sigma^{prox}_3)\Exp[F(x^k)-F(x^*)],
$$ 
where $\sigma_3^{prox}:=\frac{\tau \min(1,s_3)}{n}$ and
$$
 s_3:=\lambda_{\min} \left[ \left(\bD(v+\gamma)\right)^{-1}(\bD(\gamma)+\bG) \right].
$$
\end{proposition}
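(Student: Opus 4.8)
The plan is to follow the proof of Theorem~\ref{prop-proximal} essentially line for line, the only structural change being that wherever the exact principal submatrix $\bM_{S_k}$ governs the quadratic model, it is replaced by the separable majorant $\bD(v)$ furnished by the ESO inequality \eqref{eq:ESO}. I would start from the smoothness estimate \eqref{eq:09809}, take expectation over $S_k$, and invoke $\Exp[\bM_{\Sam}]\preceq \bD(p)\bD(v)$ together with $\bD(p)=\tfrac{\tau}{n}\bI$ (uniform sampling) to obtain, for every $h\in\R^n$,
\[
\Exp_k[f(x^k+\bI_{S_k}h)]\leq f(x^k)+\tfrac{\tau}{n}\left(\ve{\nabla f(x^k)}{h}+\tfrac12\ve{\bD(v)h}{h}\right).
\]

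Next I would bring in the regularizer $\Psi(x):=\sum_i\psi_i(x_i)$. Because the per-coordinate subproblems in Algorithm~\ref{alg:pcdm} are separable, the computed step equals $\bI_{S_k}\tilde h$, where $\tilde h$ is the global minimizer over $\R^n$ of the separable majorant $H(h):=\ve{\nabla f(x^k)}{h}+\tfrac12\ve{\bD(v)h}{h}+\sum_i\psi_i(x_i^k+h_i)$. Splitting $\Exp_k[\sum_i\psi_i(x_i^k+(\bI_{S_k}\tilde h)_i)]$ according to whether $i\in S_k$ (probability $\tau/n$) or not, and combining with the displayed bound on $\Exp_k[f(\cdot)]$, I get
\[
\Exp_k[F(x^{k+1})]\leq F(x^k)+\tfrac{\tau}{n}\left(H(\tilde h)-\Psi(x^k)\right)\leq F(x^k)+\tfrac{\tau}{n}\left(H(h)-\Psi(x^k)\right)
\]
for every $h\in\R^n$, the last inequality because $\tilde h$ minimizes the separable $H$.

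The only substantive step is then the test point $h=t(x^*-x^k)$, $t\in[0,1]$. Writing $r=x^*-x^k$, I would use the $\gamma_i$-strong convexity of each $\psi_i$ along the segment to bound $\sum_i\psi_i(x_i^k+tr_i)-\Psi(x^k)\leq t(\Psi(x^*)-\Psi(x^k))-\tfrac{t(1-t)}{2}\ve{\bD(\gamma)r}{r}$, and the strong convexity of $f$ (Assumption~\ref{ass:strong_convexity}) to bound $\ve{\nabla f(x^k)}{r}\leq f(x^*)-f(x^k)-\tfrac12\ve{\bG r}{r}$. Collecting the cross and quadratic contributions and using $\bD(v)+\bD(\gamma)=\bD(v+\gamma)$, the right-hand side assembles into
\[
t(F(x^*)-F(x^k))-\tfrac{t}{2}\ve{(\bG+\bD(\gamma))r}{r}+\tfrac{t^2}{2}\ve{\bD(v+\gamma)r}{r}.
\]
By definition of $s_3$ we have $\bG+\bD(\gamma)\succeq s_3\,\bD(v+\gamma)$, so choosing $t=\min(1,s_3)\leq s_3$ gives $t\,\ve{\bD(v+\gamma)r}{r}\leq s_3\,\ve{\bD(v+\gamma)r}{r}\leq\ve{(\bG+\bD(\gamma))r}{r}$, which forces the last two terms to be $\leq 0$. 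Hence $\Exp_k[F(x^{k+1})]-F(x^*)\leq(1-\tfrac{\tau\min(1,s_3)}{n})(F(x^k)-F(x^*))$; taking total expectation and iterating yields the claim.

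Given Theorem~\ref{prop-proximal}, the argument is routine. The main thing to get right is the interplay between the ESO bound and the definition of $s_3$ in the $t$-optimization: one must check that $t=\min(1,s_3)$ is admissible in both regimes $s_3\leq 1$ and $s_3>1$ and that it annihilates the quadratic remainder. I would also double-check the bookkeeping in the split of $\Exp_k[\sum_i\psi_i]$, since that is precisely where the factor $\tau/n$ in front of the regularizer gap — and thus in $\sigma_3^{prox}$ — originates. Finally, comparing $s_3$ with $s_1$ of Theorem~\ref{prop-proximal}, the inequality $\tfrac{n}{\tau}\Exp[\bM_{\Sam}]\preceq\bD(v)$ shows $s_1\geq s_3$, confirming that the PCDM rate is the weaker one, consistent with the smooth-case ordering $\sigma_3\leq\sigma_1$.
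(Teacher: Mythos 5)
Your argument is correct and is exactly the ``minor modification of the arguments in PCDM'' that the paper's one-line proof sketch alludes to: you take the expectation over $S_k$ \emph{before} invoking the ESO bound $\Exp[\bM_{\Sam}]\preceq\bD(p)\bD(v)=\tfrac{\tau}{n}\bD(v)$ (which is necessary, since the per-realization submatrix $\bM_{S_k}$ need not be dominated by $\bD(v)_{S_k}$), exploit separability so that the applied step is $\bI_{S_k}\tilde h$ for a deterministic minimizer $\tilde h$ of the separable majorant, and then run the same test-point argument $h=t(x^*-x^k)$ with $t=\min(1,s_3)$ as in the appendix proof of Theorem~\ref{prop-proximal}. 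In effect you have supplied the details the paper omits, and the bookkeeping (the $\tau/n$ factor on the regularizer gap, and the annihilation of the quadratic remainder via $\bG+\bD(\gamma)\succeq s_3\,\bD(v+\gamma)$) all checks out.
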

\begin{proof} {\em Sketch:} The proof is a minor modification of the arguments in \cite{PCDM}.
\end{proof}

\subsection{Comparison of the rates of Algorithms~\ref{alg:a1} and \ref{alg:pcdm}}

We now show that the rate of linear (geometric) convergence of our method is better than that of PCDM. 

\begin{proposition}  $\sigma^{prox}_{1}\geq \sigma^{prox}_{3}$.
\end{proposition}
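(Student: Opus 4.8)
The plan is to observe that $\sigma^{prox}_1$ and $\sigma^{prox}_3$ share the common prefactor $\tfrac{\tau}{n}$ and the same outer truncation $\min(1,\cdot)$, so by monotonicity of the nondecreasing map $t\mapsto\min(1,t)$ it suffices to prove the single inequality $s_1\geq s_3$. To this end I would abbreviate $\bB:=\bD(\gamma)+\bG$, which is positive definite since $\bG\succ0$ and $\bD(\gamma)\succeq0$, and set $\bA_1:=\tfrac{n}{\tau}\Exp[\bM_{\Sam}]+\bD(\gamma)$ together with $\bA_3:=\bD(v+\gamma)=\bD(v)+\bD(\gamma)$. By Lemma~\ref{lem:1} we have $\Exp[\bM_{\Sam}]\succ0$, and $v\in\R_{++}^n$ by assumption, so both $\bA_1$ and $\bA_3$ are positive definite and the quantities $s_1=\lambda_{\min}(\bA_1^{-1}\bB)$ and $s_3=\lambda_{\min}(\bA_3^{-1}\bB)$ are well defined.

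The heart of the argument is a single Loewner-order inequality: I claim $\bA_1\preceq\bA_3$, which is equivalent to $\tfrac{n}{\tau}\Exp[\bM_{\Sam}]\preceq\bD(v)$. This is exactly where the ESO condition \eqref{eq:ESO} enters. Because the sampling is uniform, $p_i=\tfrac{\tau}{n}$ for every $i$, hence $\bD(p)=\tfrac{\tau}{n}\bI$; substituting this into \eqref{eq:ESO} gives $\Exp[\bM_{\Sam}]\preceq\tfrac{\tau}{n}\bD(v)$, and multiplying through by $\tfrac{n}{\tau}>0$ yields precisely $\tfrac{n}{\tau}\Exp[\bM_{\Sam}]\preceq\bD(v)$. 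Adding $\bD(\gamma)$ to both sides preserves the order and delivers $\bA_1\preceq\bA_3$.

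Finally I would propagate this inequality through inversion and congruence. Since $0\prec\bA_1\preceq\bA_3$, inversion reverses the order, giving $\bA_3^{-1}\preceq\bA_1^{-1}$; congruence by the symmetric factor $\bB^{1/2}$ preserves it, so $\bB^{1/2}\bA_3^{-1}\bB^{1/2}\preceq\bB^{1/2}\bA_1^{-1}\bB^{1/2}$; and passing to the smallest eigenvalue (which is monotone with respect to the Loewner order), together with the identity $\lambda_{\min}(\bX^{-1}\bY)=\lambda_{\min}(\bY^{1/2}\bX^{-1}\bY^{1/2})$ noted right after Theorem~\ref{prop-proximal}, yields $s_3\leq s_1$. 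Applying $\min(1,\cdot)$ and multiplying by $\tfrac{\tau}{n}$ then gives $\sigma^{prox}_3\leq\sigma^{prox}_1$, as desired.

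I do not anticipate a serious obstacle: the proof is essentially a chain of order-preserving and order-reversing operations on positive definite matrices. The one point that genuinely requires care is recognizing that uniform sampling is precisely the hypothesis that turns the ESO bound—which carries the factor $\bD(p)$—into a clean comparison with $\bD(v)$, since only then is $\bD(p)$ a scalar multiple of the identity and the reduction $\tfrac{n}{\tau}\Exp[\bM_{\Sam}]\preceq\bD(v)$ goes through. For a non-uniform sampling $\bD(p)$ would not commute out as a scalar and this step would fail. I would also verify that the truncation by $\min(1,\cdot)$ cannot interfere, but as it is monotone nondecreasing this is immediate.
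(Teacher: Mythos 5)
Your proof is correct and follows essentially the same route as the paper: reduce to $s_1\geq s_3$, use uniformity to turn the ESO bound \eqref{eq:ESO} into $\tfrac{n}{\tau}\Exp[\bM_{\Sam}]\preceq \bD(v)$, add $\bD(\gamma)$, and conclude via Loewner-order monotonicity of inversion, congruence, and $\lambda_{\min}$. The paper compresses the final inversion/congruence step into ``whence $s_1\geq s_3$''; you simply spell it out.
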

\begin{proof}
Since $p_i=\tfrac{\tau}{n}$ for all $i$, we have $\bD(p)=\tfrac{\tau}{n}\bI$ and hence from~\eqref{eq:ESO} we deduce that:
\[ \frac{n}{\tau}\Exp[ \bM_{\Sam}]+ \bD(\gamma) \overset{\eqref{eq:ESO}}{\preceq} \bD(v)+ \bD(\gamma) = \bD(v+\gamma),\]
whence  $s_1 \geq s_3$, and the claim follows.
\end{proof}

\section{Empirical Risk Minimization} \label{sec:ERM}

We now turn our attention to the \textit{empirical risk minimization} problem:
\begin{equation}\label{eq:primal}
\min_{w \in \R^d}
P(w) := \tfrac{1}{n}\sum_{i=1}^n \phi_i(a_i^\top w) + \lambda g(w).\end{equation}
We assume that   $g:\R^d\rightarrow \R$ is a 1-strongly convex function with respect 
to the L2 norm  and each loss function $\phi_i:\R\rightarrow \R$ is convex and $1/\gamma$-smooth. Each $a_i $ is a $d$-dimensional vector and for ease of presentation we write $\bA=(a_1,\dots,a_n)=\sum_{i=1}^ n  a_i e_i^\top $. Let $g^*$ and $\{\phi_i^*\}_i$ be  the Fenchel conjugate functions of $g$ and $\{\phi_i\}_i$, respectively. In the case of $g$, for instance, we have  $g^*(s) = \sup_{w\in \R^d} \ve{w}{s}-g(w)$. The (Fenchel) dual problem of~\eqref{eq:primal} can be written as:
\begin{equation}\label{eq:dual}\max_{\alpha \in \R^{n}} 
D(\alpha)
:= \tfrac{1}{n}\sum_{i=1}^n -\phi_i^*(-\alpha_i)-\lambda g^*\left(\tfrac{1}{\lambda n} \bA \alpha \right).\end{equation}

\subsection{SDNA: A new algorithm for ERM}

Note that the dual problem has the form \eqref{eq:probse1}
\begin{equation}\label{eq:Dual-Alternative} \min_{\alpha\in \R^n} F(\alpha) \equiv  f(\alpha) + \sum_{i=1}^n \psi_i(\alpha_i),\end{equation} 
where  $F(\alpha)=-D(\alpha)$, $f(\alpha) = \lambda g^*(\tfrac{1}{\lambda n}\bA \alpha)$ and $\psi(\alpha_i) = \tfrac{1}{n}\phi_i^*(-\alpha_i)$. It is easy to see that $f$ satisfies Assumption~\ref{ass:smoothness} with $\bM:= \tfrac{1}{n}\bX$, where $\bX:=\frac{1}{\lambda n}\bA^\top \bA$. Moreover, $\psi_i$ is $\tfrac{\gamma}{n}$-strongly convex. We can therefore apply Algorithm~\ref{alg:a1} to solve the dual \eqref{eq:Dual-Alternative}. This is what Algorithm~\ref{algorithm:PRCDM-composite} does.

\begin{algorithm}[ht!]
\caption{Stochastic Dual Newton Ascent (SDNA)}
\label{alg:pd}
\begin{algorithmic}[1] \label{algorithm:PRCDM-composite}
\STATE \textbf{Parameters:}  proper  nonvacuous sampling $\Sam$
 \STATE \textbf{Initialization:} $\alpha^0\in\R^n$; $\bar\alpha^0=\frac{1}{\lambda n} \bA \alpha^0$
\FOR {$k=0,1,2,\dots$}
\STATE Primal update: $w^k= \nabla g^*(\bar \alpha^k)$
 \STATE Generate a random set of blocks $S_k\sim \Sam $
 \STATE Compute: \vspace{-0.2in}
$$
\Delta \alpha^k=\argmin_{h\in \R^n} \ve{(\bA^\top w^k)_{S_k}}{ h}+\tfrac{1}{2} h^\top \bX_{S_k} h$$\vspace{-0.2in}\\$
\qquad \qquad\qquad\qquad+\sum_{i\in S_k}\phi_i^*(-\alpha_i^k-h_i)
$
 \STATE Dual update: $\alpha^{k+1}:= \alpha^k+ (\Delta \alpha^k)_{S_k}$
\STATE Average update: $\bar \alpha^{k+1}=\bar \alpha^k+\frac{1}{\lambda n} \sum_{i\in S_k}\Delta\alpha_i^k a_i $
\ENDFOR
\end{algorithmic}
\end{algorithm}

If  $\alpha^*$ is the optimal solution of~\eqref{eq:dual}, then the optimal solution of~\eqref{eq:primal} is given by:
\begin{align}\label{a:primaldualsolution}
w^*=\nabla g^* \left(\tfrac{1}{\lambda n} \bA \alpha^*\right).
\end{align}

With each proper sampling $\Sam$ we associate the number:
\begin{align}\label{a:theta}
 \theta (\Sam):= \min_i \frac{p_i  \lambda \gamma n}{v_i+\lambda \gamma n},
\end{align}
where
$(p_1,\dots,p_n)$ is the vector of probabilities defined in~\eqref{eq:p_i} and $v=(v_1,\dots,v_n)\in \R_{++}^n$ is a vector satisfying:
\begin{align}\label{a:PM}
\Exp[(\bA^\top \bA)_{\Sam}]\preceq \bD( p)\bD( v).
\end{align}
Closed-form expressions for $v$ satisfying this inequality, as a function of the sampling $\Sam$ chosen, can be found in \cite{ESO}. A rather conservative choice which works for any $\Sam$, irrespective of its distribution, is $v_i = \min\{\tau,\lambda'(\bA^\top\bA)\}\|a_i\|^2$, where $\lambda'(\bY):=\max_h \{h^\top \bY h \;:\; h^\top \bD(\bY)h\leq 1\}$ and $\tau$ is a number for which $|\Sam |\leq \tau$ with probability 1 (see Theorem 5.1 in the aforementioned reference). Better bounds (with smaller $v$) can be derived for special classes of samplings.

Now we can state the main result of this section:
\begin{theorem}[Complexity of SDNA]\label{th-dual-proximal}
Let $\Sam$ be a uniform sampling and let  $\tau:=\Exp[|\Sam|]$. The output sequence $\{w^k,\alpha^k\}_{k\geq 0}$ of Algorithm~\ref{alg:pd} satisfies:
$$
\Exp[P(w^k)-D(\alpha^k)]\leq \frac{\left(1- \sigma_1^{prox}\right)^k}{\theta(\Sam)}  (D(\alpha^*)-D(\alpha^0)),
$$
where $\sigma_1^{prox}:=\frac{\tau \min(1,s_1)}{n}$ and
\begin{equation}\label{eq:s_2}s_1=\lambda_{\min} \left[ \left(\frac{1}{\tau\gamma \lambda}\Exp[(\bA^\top\bA)_{\Sam}]+ \bI\right)^{-1} \right].\end{equation}
\end{theorem}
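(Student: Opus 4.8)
The plan is to combine two ingredients: (i) the linear convergence of the \emph{dual} suboptimality, which follows by a direct application of Theorem~\ref{prop-proximal}, and (ii) an SDCA-style one-step inequality that bounds the duality gap by the per-iteration dual progress.

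For ingredient (i), I would invoke Theorem~\ref{prop-proximal} on the dual reformulation \eqref{eq:Dual-Alternative}, where $F=-D$, the function $f(\alpha)=\lambda g^*(\tfrac{1}{\lambda n}\bA\alpha)$ satisfies Assumption~\ref{ass:smoothness} with $\bM=\tfrac{1}{n}\bX=\tfrac{1}{\lambda n^2}\bA^\top\bA$, and $\psi_i$ is $\tfrac{\gamma}{n}$-strongly convex, i.e. $\bD(\gamma)=\tfrac{\gamma}{n}\bI$. Since $g^*$ need not be strongly convex, I take $\bG=0$ in Assumption~\ref{ass:strong_convexity}: only convexity of $f$ is needed, as the strong convexity of $F$ is supplied entirely by the $\psi_i$ terms, and the proof of Theorem~\ref{prop-proximal} only uses $\bG+\bD(\gamma)\succ 0$. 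Substituting these choices into the formula for $s_1$ in Theorem~\ref{prop-proximal}, and using that $\tfrac{n}{\tau}\Exp[\bM_\Sam]+\bD(\gamma)=\tfrac{\gamma}{n}\bigl(\tfrac{1}{\tau\gamma\lambda}\Exp[(\bA^\top\bA)_\Sam]+\bI\bigr)$ while $\bD(\gamma)+\bG=\tfrac{\gamma}{n}\bI$, the scalar $\tfrac{\gamma}{n}$ cancels and $s_1$ collapses to \eqref{eq:s_2}. Iterating the one-step contraction then gives $\Exp[D(\alpha^*)-D(\alpha^k)]\leq(1-\sigma_1^{prox})^k(D(\alpha^*)-D(\alpha^0))$.

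For ingredient (ii), I would prove $\Exp_k[D(\alpha^{k+1})-D(\alpha^k)]\geq \theta(\Sam)\,(P(w^k)-D(\alpha^k))$, where $\Exp_k$ is the conditional expectation over $S_k$. Following \citet{SDCA} but with an arbitrary (here uniform) sampling, the point is that since SDNA maximizes $D$ \emph{exactly} over the coordinates in $S_k$, the increment $D(\alpha^{k+1})-D(\alpha^k)$ dominates that of any feasible ``virtual'' update supported on $S_k$. I would use the virtual update $\alpha^k+t(\beta-\alpha^k)_{S_k}$ with $\beta_i:=-\phi_i'(a_i^\top w^k)$ and the fixed step $t=\min_i\tfrac{\gamma\lambda n}{v_i+\gamma\lambda n}$. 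Lower-bounding $D$ along this update via the $\gamma$-strong convexity of each $\phi_i^*$ and the $1$-Lipschitzness of $\nabla g^*$ (which controls the $g^*$ term by $\tfrac{t^2}{2\lambda n^2}\|\bA(\beta-\alpha^k)_{S_k}\|^2$), then taking $\Exp_k$ and bounding the quadratic term via the ESO inequality \eqref{a:PM}, yields an expression whose linear part reproduces the gap. Here the Fenchel--Young identity $\phi_i^*(-\beta_i)=-\beta_i\,a_i^\top w^k-\phi_i(a_i^\top w^k)$, together with $g(w^k)+g^*(\bar\alpha^k)=\ve{w^k}{\bar\alpha^k}$ (from $w^k=\nabla g^*(\bar\alpha^k)$), is used to rewrite $\tfrac{1}{n}\sum_i[\phi_i^*(-\alpha_i^k)+\phi_i(a_i^\top w^k)+\alpha_i^k a_i^\top w^k]=P(w^k)-D(\alpha^k)$. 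The choice of $t$ is precisely what makes each quadratic remainder coefficient $\tfrac{t p_i}{2n}\bigl(\gamma(1-t)-\tfrac{t v_i}{\lambda n}\bigr)$ nonnegative, so those terms drop, leaving $\tfrac{t\tau}{n}(P(w^k)-D(\alpha^k))$; and for uniform sampling $p_i=\tau/n$, giving $\tfrac{t\tau}{n}=\theta(\Sam)$.

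Finally I would combine the two ingredients: taking total expectation in (ii) and using $D(\alpha^{k+1})\leq D(\alpha^*)$ gives $\theta(\Sam)\Exp[P(w^k)-D(\alpha^k)]\leq \Exp[D(\alpha^*)-D(\alpha^k)]$, and inserting the geometric decay from (i) and dividing by $\theta(\Sam)$ produces the stated bound. The main obstacle is ingredient (ii): constructing the virtual update correctly for the sampling, carrying \eqref{a:PM} through the expectation, and verifying that the single step $t$ simultaneously annihilates every quadratic remainder while forcing the linear coefficient to equal $\theta(\Sam)$. Ingredient (i) is comparatively routine, the only care being the degenerate choice $\bG=0$ and the algebraic reduction of $s_1$.
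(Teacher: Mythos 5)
Your proposal is correct and follows essentially the same route as the paper: ingredient (i) is exactly Lemma~\ref{lem:mainxxx} (Theorem~\ref{prop-proximal} applied to the dual with $\bM=\tfrac{1}{\lambda n^2}\bA^\top\bA$, $\bG=0$, $\gamma_i=\gamma/n$), and ingredient (ii) reconstructs the appendix proof of Lemma~\ref{l-pdgd}, including the same virtual update $h_i=-t(\alpha_i^k+\nabla\phi_i(a_i^\top w^k))$, the same use of $\gamma$-strong convexity of $\phi_i^*$ and of \eqref{a:PM}, and the same choice of step size making the quadratic remainders vanish. The final combination via $D(\alpha^{k+1})\leq D(\alpha^*)$ is also how the paper concludes.
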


In the case of quadratic losses and quadratic regularizer, we can sharpen the complexity bound:
\begin{theorem}\label{th-dual-smooth}
 When both $\phi_i$ and $g$ are quadratic functions, the output sequence $\{w^k,\alpha^k\}_{k\geq 0}$ 
of Algorithm~\ref{alg:pd} satisfies:
$$
\Exp[P(w^k)-D(\alpha^k)]\leq \frac{(1-\sigma_1)^k}{\theta(\Sam)}  \left(D(\alpha^*)-D(\alpha^0)\right)
$$
where
$$
\sigma_1:=\lambda_{\min}\left[ \Exp\left[\left(\left(\tfrac{1}{\lambda n}\bA^\top \bA+\gamma\bI\right)_{\Sam}\right)^{-1}
\left(\tfrac{1}{\lambda n}\bA^\top \bA+\gamma\bI\right) \right]\right].
$$

\end{theorem}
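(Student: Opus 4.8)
The plan is to show that in the quadratic regime SDNA is exactly Method~1 applied to the (now smooth) dual objective $F=-D$, so that the sharp rate of Theorem~\ref{thm:3} applies, and then to pass from the dual suboptimality to the duality gap precisely as in the proof of Theorem~\ref{th-dual-proximal}. First I would record the structure of the dual. When $g=\tfrac12\|\cdot\|^2$ and each $\phi_i$ is quadratic with curvature $1/\gamma$, the conjugates $g^*,\phi_i^*$ are quadratic, so $F(\alpha)=\lambda g^*(\tfrac{1}{\lambda n}\bA\alpha)+\tfrac1n\sum_i\phi_i^*(-\alpha_i)$ is an exact quadratic with constant Hessian
\[ \bH := \nabla^2 F = \tfrac{1}{\lambda n^2}\bA^\top\bA+\tfrac{\gamma}{n}\bI = \tfrac1n\Big(\tfrac{1}{\lambda n}\bA^\top\bA+\gamma\bI\Big)\succ 0. \]
Because $F$ is \emph{exactly} quadratic, both Assumption~\ref{ass:smoothness} and Assumption~\ref{ass:strong_convexity} hold with equality and with the \emph{same} matrix $\bM=\bG=\bH$. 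This is the feature unavailable for general smooth losses: there the composite analysis of Theorem~\ref{prop-proximal} may credit only the separable curvature $\tfrac{\gamma}{n}\bI$ to strong convexity, whereas here the full curvature $\bM$ of $f$ itself, including the off-diagonal entries of $\bA^\top\bA$, enters $\bG$.

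Next I would identify the iterates. Since $f$ is quadratic, the model minimized in Step~6 of Algorithm~\ref{alg:pd} is exact, so the update $\alpha^{k+1}=\alpha^k+(\Delta\alpha^k)_{S_k}$ is the exact minimizer of $F$ over the affine set $\alpha^k+\{h: h_i=0 \text{ for } i\notin S_k\}$; by \eqref{eq:inverses} this is the Method~1 / block-Newton step $-\pseudoinv{(\bH_{S_k})}\nabla F(\alpha^k)$. Hence Theorem~\ref{thm:3} (case $m=1$, with $\bM=\bG=\bH$) applies and gives
\[ \Exp[F(\alpha^{k+1})-F(\alpha^*)]\le(1-\sigma_1)\,\Exp[F(\alpha^k)-F(\alpha^*)], \qquad \sigma_1=\lambda_{\min}\!\big(\bH^{1/2}\Exp[\pseudoinv{(\bH_{\Sam})}]\bH^{1/2}\big). \]
To simplify $\sigma_1$, set $\bB:=\tfrac{1}{\lambda n}\bA^\top\bA+\gamma\bI$, so $\bH=\tfrac1n\bB$ and $\pseudoinv{(\bH_{\Sam})}=n\,\pseudoinv{(\bB_{\Sam})}$; the two factors of $\tfrac1n$ cancel, giving
\[ \bH^{1/2}\,\Exp[\pseudoinv{(\bH_{\Sam})}]\,\bH^{1/2} = \bB^{1/2}\,\Exp[\pseudoinv{(\bB_{\Sam})}]\,\bB^{1/2}. \]
Applying the similarity identity $\lambda_{\min}(\bB^{1/2}\bY\bB^{1/2})=\lambda_{\min}(\bY\bB)$ (valid since $\bB\succ0$, as the paper already uses after Theorem~\ref{prop-proximal}) and pulling the deterministic $\bB$ inside the expectation yields the stated formula for $\sigma_1$. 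Iterating the contraction and using $F(\alpha^k)-F(\alpha^*)=D(\alpha^*)-D(\alpha^k)$ gives $\Exp[D(\alpha^*)-D(\alpha^k)]\le(1-\sigma_1)^k(D(\alpha^*)-D(\alpha^0))$.

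Finally I would convert the dual suboptimality into a duality gap, which is handled verbatim as in Theorem~\ref{th-dual-proximal}. The key primal--dual ingredient is the one-step inequality
\[ \Exp_k[D(\alpha^{k+1})-D(\alpha^k)]\ge\theta(\Sam)\,(P(w^k)-D(\alpha^k)), \]
obtained by the Shalev-Shwartz--Zhang device generalized to an arbitrary sampling: one lower-bounds the exact block ascent of SDNA by the ascent along an explicit feasible direction toward a primal-consistent dual point, and extracts $\theta(\Sam)$ from \eqref{a:theta} using the ESO bound \eqref{a:PM} together with the $\tfrac{\gamma}{n}$-strong convexity of the $\psi_i$. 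Since $D(\alpha^{k+1})\le D(\alpha^*)$, this yields $\Exp[P(w^k)-D(\alpha^k)]\le\tfrac{1}{\theta(\Sam)}\Exp[D(\alpha^*)-D(\alpha^k)]$; combining with the geometric decay above produces the claimed bound.

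I expect the genuinely delicate step to be this duality-gap conversion rather than the rate computation. Justifying the one-step gap inequality with the \emph{exact} constant $\theta(\Sam)$ for an arbitrary uniform sampling requires the careful choice of the comparison direction and a tight use of \eqref{a:PM}, and it is here that the primal and dual iterates must be linked through $w^k=\nabla g^*(\bar\alpha^k)$; this part is shared with Theorem~\ref{th-dual-proximal}. By contrast, the ingredient specific to the quadratic case, namely upgrading the strong-convexity matrix from $\tfrac{\gamma}{n}\bI$ to the full Hessian $\bH$ and thereby replacing $\sigma_1^{prox}$ by the Method~1 rate $\sigma_1$, is a clean consequence of the smooth theory of Section~\ref{sec:smooth} once the exactness of the quadratic model is observed.
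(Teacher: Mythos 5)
Your proposal is correct and follows essentially the same route as the paper: observe that the quadratic dual has constant Hessian $\tfrac{1}{\lambda n^2}\bA^\top\bA+\tfrac{\gamma}{n}\bI$, apply the Method~1 rate of Theorem~\ref{thm:3} with $\bM=\bG$ equal to this Hessian (this is Lemma~\ref{lem:98h98s}), and then convert dual suboptimality to the duality gap via the one-step inequality of Lemma~\ref{l-pdgd}. Your explicit simplification of $\sigma_1$ (cancellation of the $1/n$ factors and the similarity identity) is a detail the paper leaves implicit, but the argument is the same.
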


\subsection{Complexity analysis}

We first establish that SDNA is able to solve the dual.
\begin{lemma}\label{lem:mainxxx}
Let $\Sam$ be a uniform sampling and $\tau:=\Exp[|\Sam|]$. The output sequence $\{\alpha^k\}_{k\geq 0}$ of Algorithm~\ref{alg:pd}
satisfies:
$$
\Exp[D(\alpha^*)-D(\alpha^k)] \leq \left(1-\sigma_1^{prox}\right)^k (D(\alpha^*)-D(\alpha^0)),
$$
where $\sigma_1^{prox}$ is as in Theorem~\ref{th-dual-proximal}.
\end{lemma}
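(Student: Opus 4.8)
The plan is to observe that Algorithm~\ref{alg:pd} (SDNA) is exactly Algorithm~\ref{alg:a1} applied to the dual written in the composite form~\eqref{eq:Dual-Alternative}, and then to invoke Theorem~\ref{prop-proximal}. First I would match the two subproblems. For the dual we have $f(\alpha)=\lambda g^*(\tfrac{1}{\lambda n}\bA\alpha)$, so the chain rule gives $\nabla f(\alpha^k)=\tfrac{1}{n}\bA^\top w^k$ with $w^k=\nabla g^*(\bar\alpha^k)$; moreover $\bM=\tfrac{1}{n}\bX=\tfrac{1}{\lambda n^2}\bA^\top\bA$ and $\psi_i(\alpha_i)=\tfrac{1}{n}\phi_i^*(-\alpha_i)$. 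Inserting these into the subproblem defining $h^k$ in Algorithm~\ref{alg:a1} yields precisely $\tfrac{1}{n}$ times the objective minimized for $\Delta\alpha^k$ in Algorithm~\ref{alg:pd}; since positive scaling does not move a minimizer, the two updates agree, and the dual step $\alpha^{k+1}=\alpha^k+(\Delta\alpha^k)_{S_k}$ coincides with $x^{k+1}=x^k+h^k_{S_k}$.

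Next I would check the hypotheses. Assumption~\ref{ass:smoothness} holds for $f$ with $\bM=\tfrac{1}{\lambda n^2}\bA^\top\bA$ because $g$ is $1$-strongly convex, hence $g^*$ is $1$-smooth. Since each $\phi_i$ is $1/\gamma$-smooth, its conjugate $\phi_i^*$ is $\gamma$-strongly convex, so $\psi_i=\tfrac{1}{n}\phi_i^*(-\cdot)$ is $\tfrac{\gamma}{n}$-strongly convex; thus Assumption~\ref{ass:composite} holds with $\bD(\gamma)=\tfrac{\gamma}{n}\bI$. The delicate point---which I expect to be the main obstacle---is Assumption~\ref{ass:strong_convexity}: the dual $f$ has Hessian $\tfrac{1}{\lambda n^2}\bA^\top\nabla^2 g^*(\bar\alpha)\bA$, which is singular whenever $\bA$ fails to have full row rank (the typical ERM regime $n>d$), so $f$ is only convex and no strictly positive definite $\bG$ exists. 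The resolution is that the conclusion of Theorem~\ref{prop-proximal} uses strong convexity only through the composite objective $F$, i.e. only through $\bD(\gamma)+\bG\succ0$; since $\gamma>0$ gives $\bD(\gamma)=\tfrac{\gamma}{n}\bI\succ0$, one may take $\bG=0$. Concretely I would re-read the (omitted) proof of Theorem~\ref{prop-proximal} and confirm that it goes through verbatim with $\bG\succeq0$ in place of $\bG\succ0$, the denominator $\tfrac{n}{\tau}\Exp[\bM_{\Sam}]+\bD(\gamma)$ remaining invertible by Lemma~\ref{lem:1}.

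Finally I would compute the rate and iterate. With $\bM=\tfrac{1}{\lambda n^2}\bA^\top\bA$, $\bD(\gamma)=\tfrac{\gamma}{n}\bI$ and $\bG=0$ one has $\tfrac{n}{\tau}\Exp[\bM_{\Sam}]+\bD(\gamma)=\tfrac{\gamma}{n}\bigl(\tfrac{1}{\lambda\tau\gamma}\Exp[(\bA^\top\bA)_{\Sam}]+\bI\bigr)$ and $\bD(\gamma)+\bG=\tfrac{\gamma}{n}\bI$, so the factors $\tfrac{\gamma}{n}$ cancel inside $\lambda_{\min}$ and the quantity $s_1$ of Theorem~\ref{prop-proximal} collapses exactly to~\eqref{eq:s_2}; hence $\sigma_1^{prox}=\tfrac{\tau\min(1,s_1)}{n}$ is the constant claimed in Theorem~\ref{th-dual-proximal} and in the lemma. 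Theorem~\ref{prop-proximal} then gives the one-step contraction $\Exp[F(\alpha^{k+1})-F(\alpha^*)]\leq(1-\sigma_1^{prox})\Exp[F(\alpha^k)-F(\alpha^*)]$, and rewriting $F=-D$ turns this into $\Exp[D(\alpha^*)-D(\alpha^{k+1})]\leq(1-\sigma_1^{prox})\Exp[D(\alpha^*)-D(\alpha^k)]$. Unrolling the recursion over $k$ iterations (using the tower property of conditional expectation) yields $\Exp[D(\alpha^*)-D(\alpha^k)]\leq(1-\sigma_1^{prox})^k(D(\alpha^*)-D(\alpha^0))$, which is the claim.
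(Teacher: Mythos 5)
Your proof is correct and follows essentially the same route as the paper: the paper's own proof is a one-line reduction observing that for uniform $\Sam$ Algorithm~\ref{alg:pd} coincides with Algorithm~\ref{alg:a1} applied to \eqref{eq:Dual-Alternative}, and then invoking Theorem~\ref{prop-proximal} with $\bM=\tfrac{1}{\lambda n^2}\bA^\top\bA$, $\bG=0$ and $\gamma_i=\tfrac{\gamma}{n}$. Your extra care in noting that one must take $\bG=0$ (since the dual smooth part need not be strongly convex when $n>d$) and in checking that the proof of Theorem~\ref{prop-proximal} only ever uses $\bD(\gamma)+\bG$, together with the verification that $s_1$ then collapses to \eqref{eq:s_2}, fills in a subtlety the paper's one-line argument glosses over.
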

\begin{proof}
If $\Sam$ is uniform,
then the output of Algorithm~\ref{alg:pd} is equivalent to the output of 
Algorithm~\ref{alg:a1} applied to \eqref{eq:Dual-Alternative}. Therefore, the result is obtained by
 applying Theorem~\ref{prop-proximal}
 with $\bM=\tfrac{1}{\lambda n^2}\bA^\top \bA$, $\bG=0$ and $\gamma_i=\tfrac{\gamma}{n}$ for all $i$.
\end{proof}

We now prove a sharper result in the case of quadratic loss and quadratic regularizer.

\begin{lemma}\label{lem:98h98s}
 If  $\{\phi_i\}_i$ and $g$ are quadratic, then the output sequence $\{\alpha^k\}_{k\geq 0}$ of Algorithm~\ref{alg:pd}
satisfies:
$$
\Exp[D(\alpha^*)-D(\alpha^k)] \leq (1-\sigma_1)^k (D(\alpha^*)-D(\alpha^0)),
$$
where  $\sigma_1$ is as in Theorem~\ref{th-dual-smooth}.
\end{lemma}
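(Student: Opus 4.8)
The plan is to exploit quadraticity to reduce Algorithm~\ref{alg:pd} to the \emph{smooth} Method~1 of Section~\ref{sec:smooth}, and then invoke Theorem~\ref{thm:3} (the sharp Method~1 rate \eqref{eq:sigma1}) in place of the composite Theorem~\ref{prop-proximal} that was used in Lemma~\ref{lem:mainxxx}. The key observation is that when both $g$ and all $\phi_i$ are quadratic, the separable penalty $\sum_i\psi_i(\alpha_i)=\tfrac1n\sum_i\phi_i^*(-\alpha_i)$ in the dual reformulation \eqref{eq:Dual-Alternative} is itself quadratic, and hence can be folded into the smooth part $f$, so that $F=-D$ becomes a single smooth, strongly convex quadratic. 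For such an $F$, the SDNA subproblem in Step~6 of Algorithm~\ref{alg:pd} is an exact quadratic minimization over the coordinates in $S_k$, which is precisely the Method~1 update for $F$; this is the same equivalence exploited in Lemma~\ref{lem:mainxxx}, except that now the subproblem has no genuinely nonsmooth part.

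First I would compute the relevant Hessian. Since $g$ is $1$-strongly convex quadratic, $g^*$ is quadratic; since each $\phi_i$ is $1/\gamma$-smooth quadratic, each $\phi_i^*$ is $\gamma$-strongly convex quadratic, so $\psi_i$ is $\tfrac{\gamma}{n}$-strongly convex quadratic and contributes the Hessian $\tfrac{\gamma}{n}\bI$. Combined with the Hessian $\tfrac{1}{\lambda n^2}\bA^\top\bA$ of $f$ (as in Lemma~\ref{lem:mainxxx}), the folded objective $F$ is an exact quadratic with Hessian
\[
\bar\bM \;=\; \tfrac{1}{\lambda n^2}\bA^\top\bA+\tfrac{\gamma}{n}\bI \;=\; \tfrac1n\bH,\qquad \bH:=\tfrac{1}{\lambda n}\bA^\top\bA+\gamma\bI .
\]
Because $F$ is exactly quadratic, Assumptions~\ref{ass:smoothness} and \ref{ass:strong_convexity} both hold \emph{with equality} and with the \emph{same} matrix $\bM=\bG=\bar\bM$; moreover $\gamma>0$ forces $\bH\succ0$, hence $\bar\bM\succ0$, so the positivity needed by Theorem~\ref{thm:3} (via Lemma~\ref{lem:1}) is in force.

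With this in hand I would apply Theorem~\ref{thm:3} in the Method~1 case to the sequence $\{\alpha^k\}$, using $F(\alpha^k)-F(\alpha^*)=D(\alpha^*)-D(\alpha^k)$. This yields the geometric decrease at rate $1-\sigma_1$ with $\sigma_1=\lambda_{\min}\big(\bar\bM^{1/2}\Exp[\pseudoinv{(\bar\bM_{\Sam})}]\bar\bM^{1/2}\big)$. It then remains to match this to the expression in Theorem~\ref{th-dual-smooth}. Using the similarity identity recorded after Theorem~\ref{prop-proximal} I rewrite $\sigma_1=\lambda_{\min}\big(\Exp[\pseudoinv{(\bar\bM_{\Sam})}]\bar\bM\big)$, then substitute $\bar\bM=\tfrac1n\bH$ together with the scaling $\pseudoinv{(\bar\bM_{\Sam})}=n\,\pseudoinv{(\bH_{\Sam})}$ (immediate from the defining identity \eqref{eq:inverses}). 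The two factors of $n$ cancel, giving $\sigma_1=\lambda_{\min}\big(\Exp[\pseudoinv{(\bH_{\Sam})}\bH]\big)$, exactly as claimed.

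The main obstacle, and the only place where quadraticity is essential, is the second step: justifying that $F$ satisfies both Assumptions~\ref{ass:smoothness} and \ref{ass:strong_convexity} with one and the same Hessian $\bar\bM$, so that the sharp smooth rate \eqref{eq:sigma1} applies in place of the weaker composite rate $\sigma_1^{prox}$ of Lemma~\ref{lem:mainxxx}. Everything else---the equivalence of the SDNA subproblem with the Method~1 step, the pseudoinverse scaling, and the $\lambda_{\min}$ similarity identity---is routine bookkeeping with the $\tfrac1n$ factors.
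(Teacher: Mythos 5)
Your proposal is correct and follows essentially the same route as the paper: the paper's own proof simply observes that the dual objective is quadratic with Hessian $\tfrac{1}{\lambda n^2}\bA^\top\bA+\tfrac{\gamma}{n}\bI$ and applies Theorem~\ref{thm:3} with $\bM=\bG$ equal to that Hessian. Your additional bookkeeping (the equivalence of the SDNA subproblem with the Method~1 step for quadratic $\psi_i$, and the $1/n$ scaling cancellation matching $\sigma_1$ to the form in Theorem~\ref{th-dual-smooth}) is exactly what the paper leaves implicit.
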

\begin{proof}
 If  $\{\phi_i\}_i$ and $g$ are all quadratic functions, then the dual objective function is quadratic with Hessian matrix given by $
\nabla^2 D(\alpha)\equiv  \frac{1}{\lambda n^2}\bA^\top \bA+\frac{\gamma}{n}\bI$. It suffices to apply Theorem~\ref{thm:3}\eqref{eq:sigma1}, with $\bM=\bG=\nabla^2 D(\alpha)$.
\end{proof}

We now prove a more general version of a classical result in dual coordinate ascent methods which bounds the duality gap from above by the expected  dual increase.
\begin{lemma}\label{l-pdgd}
The output sequence $\{w^k,\alpha^k\}_{k\geq 0}$ of Algorithm~\ref{alg:pd} satisfies:
\[
 \Exp_k[D(\alpha^{k+1})-D(\alpha^k)] \geq \theta(\Sam) (P(w^k)-D(\alpha^k)).
\]
\end{lemma}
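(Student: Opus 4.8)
The plan is to lower-bound the dual increase by comparing the algorithm's exact subproblem solution against a cleverly chosen \emph{suboptimal} test direction. By Assumption~\ref{ass:smoothness} applied to $f(\alpha)=\lambda g^*(\tfrac{1}{\lambda n}\bA\alpha)$ (for which $\bM=\tfrac{1}{\lambda n^2}\bA^\top\bA$ and, since $w^k=\nabla g^*(\bar\alpha^k)$, $\nabla f(\alpha^k)=\tfrac1n\bA^\top w^k$), the objective minimized in Algorithm~\ref{alg:pd} equals, up to the factor $n$ and the additive constant $\sum_{i\in S_k}\phi_i^*(-\alpha_i^k)$, an upper model of $F(\alpha^k+h_{S_k})-F(\alpha^k)$, where $F=-D$. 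Because $\Delta\alpha^k$ minimizes this model over $h$, for \emph{every} direction $h$ supported on $S_k$ one obtains
\[
D(\alpha^{k+1})-D(\alpha^k)\ \geq\ -\tfrac1n\left[\ve{(\bA^\top w^k)_{S_k}}{h}+\tfrac12 h^\top\bX_{S_k}h+\sum_{i\in S_k}\bigl(\phi_i^*(-\alpha_i^k-h_i)-\phi_i^*(-\alpha_i^k)\bigr)\right].
\]
Everything now rests on a good choice of $h$.

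I would take $h_i=\theta_i\,(u_i^k-\alpha_i^k)$ for $i\in S_k$ (and $0$ otherwise), where $u_i^k:=-\phi_i'(a_i^\top w^k)$ and the constants $\theta_i\in(0,1]$ are fixed at the end. The point of $u_i^k$ is the Fenchel--Young equality $\phi_i^*(-u_i^k)=-(a_i^\top w^k)u_i^k-\phi_i(a_i^\top w^k)$. Since $\phi_i^*$ is $\gamma$-strongly convex (dual to the $1/\gamma$-smoothness of $\phi_i$), I can bound $\phi_i^*(-\alpha_i^k-h_i)=\phi_i^*(\theta_i(-u_i^k)+(1-\theta_i)(-\alpha_i^k))$ by its strong-convexity inequality. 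After substituting the Fenchel--Young identity, the linear term and the convex-combination term collapse: the $i$-th summand becomes $-\theta_i G_i-\tfrac{\gamma}{2}\theta_i(1-\theta_i)(u_i^k-\alpha_i^k)^2$, where $G_i:=\phi_i(a_i^\top w^k)+\phi_i^*(-\alpha_i^k)+\alpha_i^k(a_i^\top w^k)\geq0$ is the per-example Fenchel gap (nonnegative by Fenchel--Young). Using the primal update together with $g(w^k)+g^*(\bar\alpha^k)=\ve{w^k}{\bar\alpha^k}$ and $\bar\alpha^k=\tfrac1{\lambda n}\bA\alpha^k$, one verifies $P(w^k)-D(\alpha^k)=\tfrac1n\sum_i G_i$, which is exactly the target quantity.

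It remains to take $\Exp_k$ and optimize over $\theta_i$. The random index sets turn $\sum_{i\in S_k}(\cdot)$ into $\sum_i p_i(\cdot)$, while the quadratic term is controlled by~\eqref{a:PM}: since $\Exp_k[\bX_{\Sam}]=\tfrac1{\lambda n}\Exp_k[(\bA^\top\bA)_{\Sam}]\preceq\tfrac1{\lambda n}\bD(p)\bD(v)$, it diagonalizes to at most $\tfrac1{2\lambda n}\sum_i p_i v_i\theta_i^2(u_i^k-\alpha_i^k)^2$. Hence the coefficient of $(u_i^k-\alpha_i^k)^2$ is $\tfrac{\gamma}{2}\theta_i(1-\theta_i)-\tfrac{v_i}{2\lambda n}\theta_i^2$, which vanishes exactly for $\theta_i=\tfrac{\lambda\gamma n}{v_i+\lambda\gamma n}$ --- the same choice for which $p_i\theta_i$ equals the ratio minimized in~\eqref{a:theta}. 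Discarding these (now zero) residual terms and using $G_i\geq0$ gives
\[
\Exp_k[D(\alpha^{k+1})-D(\alpha^k)]\ \geq\ \tfrac1n\sum_i p_i\theta_i G_i\ \geq\ \tfrac1n\Bigl(\min_i p_i\theta_i\Bigr)\sum_i G_i\ =\ \theta(\Sam)\,(P(w^k)-D(\alpha^k)),
\]
which is the claim; note this argument never uses uniformity of $\Sam$.

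The main obstacle I anticipate is bookkeeping rather than ideas: tracking the sign conventions of the conjugates (the dual carries $\phi_i^*(-\alpha_i)$ and the natural dual point is $-\phi_i'$), and checking that the Fenchel--Young substitution really produces exactly $-\theta_i G_i$ with no stray cross terms. The one genuine design decision is the coordinatewise step $\theta_i$, which must be tuned so that the strong-convexity gain $\tfrac{\gamma}{2}\theta_i(1-\theta_i)$ precisely offsets the second-order cost $\tfrac{v_i}{2\lambda n}\theta_i^2$ arising from~\eqref{a:PM}; this balance is what pins down the constant $\theta(\Sam)$.
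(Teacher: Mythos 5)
Your proposal is correct and follows essentially the same route as the paper's proof: lower-bound the exact block maximization by a test direction interpolating between $-\alpha_i^k$ and the derivative point $\phi_i'(a_i^\top w^k)$, use $\gamma$-strong convexity of $\phi_i^*$ together with Fenchel--Young to produce the per-example gap terms, and control the quadratic term via~\eqref{a:PM}. The only (cosmetic) difference is the step-size choice: the paper takes $s_i=\theta(\Sam)p_i^{-1}$ so that $p_is_i$ is constant and the gap sum appears exactly, then shows the residual quadratic form is positive semidefinite, whereas you take $\theta_i=\lambda\gamma n/(v_i+\lambda\gamma n)$ to cancel the residual coordinatewise and then pass to $\min_i p_i\theta_i=\theta(\Sam)$ using nonnegativity of the per-example gaps.
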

 
 The proof of the this lemma is provided in the supplementary material. Theorem~\ref{th-dual-proximal} (resp. Theorem~\ref{th-dual-smooth}) now follows by combining Lemma~\ref{lem:mainxxx} (resp. Lemma~\ref{lem:98h98s}) and  Lemma~\ref{l-pdgd}.

\subsection{New Algorithm: SDCA with Arbitrary Sampling}

When $|\Sam |=1$ with probability 1, 
SDNA reduces to a proximal variant of stochastic 
dual coordinate ascent (SDCA)~\cite{SDCA}. However, a {\bf minibatch version of standard SDCA in the ERM setup we consider here has not been previously studied in the literature.}   \citet{Pegasos2}  developed such a method but in the special case of hinge-loss (which is not smooth and hence does not fit our setup). \citet{ASDCA} studied minibatching but in conjunction with acceleration and the QUARTZ method of 
\citet{QUARTZ}, which has been analyzed for an arbitrary sampling $\Sam$, uses a different primal update than SDNA. Hence, in order to compare SDNA with an SDCA-like method which is as close a match to SDNA as possible, we need to develop a new method. {\bf Algorithm~\ref{alg:sdca} is an extension of SDCA to allow it handle an arbitrary uniform sampling $\Sam$.}


The complexity of Minibatch SDCA (we henceforth just write SDCA) is given in Theorem~\ref{th-sdca}.

\begin{theorem}\label{th-sdca}
If~\eqref{a:PM} holds, then the output sequence $\{w^k, \alpha^k\}_{k\geq 0}$ of Algorithm~\ref{alg:sdca} satisfies:
$$
\Exp[P(w^k)-D(\alpha^k)]\leq \frac{(1- \theta(\Sam))^k}{\theta(\Sam)}  \left(D(\alpha^*)-D(\alpha^0)\right).
$$
\end{theorem}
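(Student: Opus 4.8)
The plan is to reduce the entire statement to a single per-iteration inequality of the same shape as Lemma~\ref{l-pdgd}, but established directly for Algorithm~\ref{alg:sdca} rather than for SDNA. Concretely, the goal is to prove the analogue of Lemma~\ref{l-pdgd} for Algorithm~\ref{alg:sdca}, namely
\[
\Exp_k[D(\alpha^{k+1})-D(\alpha^k)] \geq \theta(\Sam)\,\bigl(P(w^k)-D(\alpha^k)\bigr),
\]
where $\Exp_k$ is the expectation over $S_k$ conditioned on $\alpha^k$. It is worth noting why the rate here degrades to $(1-\theta(\Sam))^k$ relative to the $(1-\sigma_1^{prox})^k$ rate of SDNA: in the SDNA argument we combined the \emph{sharper} dual-suboptimality contraction of Lemma~\ref{lem:mainxxx} with the gap bound of Lemma~\ref{l-pdgd}, whereas for SDCA we have only the single inequality above, which must simultaneously drive the contraction of the dual suboptimality and bound the gap.

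Granting this inequality, the rest is bookkeeping. First I would use weak and strong duality: since $w^*$ is primal optimal and $P(w^*)=D(\alpha^*)$, we have $P(w^k)-D(\alpha^k)\geq D(\alpha^*)-D(\alpha^k)\geq 0$. Feeding this into the per-iteration inequality gives $\Exp_k[D(\alpha^*)-D(\alpha^{k+1})]\leq (1-\theta(\Sam))(D(\alpha^*)-D(\alpha^k))$; taking total expectations and iterating yields $\Exp[D(\alpha^*)-D(\alpha^k)]\leq (1-\theta(\Sam))^k(D(\alpha^*)-D(\alpha^0))$. Second, for the duality gap itself, I would take total expectation in the per-iteration inequality to get $\theta(\Sam)\,\Exp[P(w^k)-D(\alpha^k)]\leq \Exp[D(\alpha^{k+1})-D(\alpha^k)]$, and bound the right-hand side by $\Exp[D(\alpha^*)-D(\alpha^k)]-\Exp[D(\alpha^*)-D(\alpha^{k+1})]\leq \Exp[D(\alpha^*)-D(\alpha^k)]$. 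Combining with the previous display gives exactly $\Exp[P(w^k)-D(\alpha^k)]\leq \tfrac{(1-\theta(\Sam))^k}{\theta(\Sam)}(D(\alpha^*)-D(\alpha^0))$, which is the claim.

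The main obstacle is therefore the per-iteration inequality, for which I would follow a QUARTZ-style argument. Since $g$ is $1$-strongly convex, $g^*$ is $1$-smooth, so the term $-\lambda g^*(\bar\alpha)$ admits a quadratic lower bound along a dual step whose second-order part is governed by $\bA^\top\bA$; after taking the expectation over $\Sam$, the inequality \eqref{a:PM} replaces $\Exp[(\bA^\top\bA)_{\Sam}]$ by $\bD(p)\bD(v)$, which decouples the bound across coordinates. For the separable part I would use that each $\phi_i^*$ is $\gamma$-strongly convex (being the conjugate of a $1/\gamma$-smooth $\phi_i$). The SDCA update for $i\in S_k$ moves $\alpha_i^k$ a fixed fraction toward the ideal dual residual $-\phi_i'(a_i^\top w^k)$; choosing that fraction to maximize the resulting per-coordinate lower bound produces precisely the weights that assemble into the constant $\theta(\Sam)=\min_i p_i\lambda\gamma n/(v_i+\lambda\gamma n)$. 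Finally I would identify the maximized lower bound on the expected dual increase with $\theta(\Sam)$ times the duality gap, using $w^k=\nabla g^*(\bar\alpha^k)$ and the Fenchel--Young relations tying $P(w^k)$ and $D(\alpha^k)$ together. The delicate points are the ESO-based decoupling and the step-size optimization that isolates the clean $\min_i$ constant; once these are in place, the telescoping above finishes the proof.
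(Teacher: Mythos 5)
Your proposal is correct and follows essentially the route the paper intends: the paper omits an explicit proof of Theorem~\ref{th-sdca}, but its appendix proof of Lemma~\ref{l-pdgd} already passes through the ESO-decoupled separable surrogate via \eqref{a:PM} and the candidate step $h_i=-s_i(\alpha_i^k+u_i^k)$ with $s_i=\theta(\Sam)p_i^{-1}$, so the same computation yields the per-iteration inequality $\Exp_k[D(\alpha^{k+1})-D(\alpha^k)]\geq\theta(\Sam)(P(w^k)-D(\alpha^k))$ for Algorithm~\ref{alg:sdca}, whose update maximizes exactly that surrogate. Your telescoping step, using weak duality and $D(\alpha^{k+1})\leq D(\alpha^*)$, is the standard SDCA-style argument the paper cites, and your observation about why the contraction factor is only $1-\theta(\Sam)$ here (one inequality doing double duty) is accurate.
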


\begin{algorithm}[ht!]
\caption{Minibatch SDCA}
\label{alg:sdca}
\begin{algorithmic}[1] 
\STATE \textbf{Parameters:}  uniform sampling $\Sam$,  vector $v\in \R_{++}^n$
 \STATE \textbf{Initialization:} $\alpha^0\in\R^n$;  set $\bar\alpha^0=\frac{1}{\lambda n} \bA \alpha^0$
\FOR {$k=0,1,2,\dots$}
\STATE Primal update: $w^k= \nabla g^*(\bar \alpha^k)$
 \STATE Generate a random set of blocks $S_k\sim \Sam $
 \STATE Compute for each $i\in S_k$ \\
$
h_i^k=\displaystyle\argmin_{h_i\in \R} h_i(a_i^\top w^k)+\frac{v_i}{2} |h_i|^2+\phi_i^*(-\alpha_i^k-h_i)
$
 \STATE Dual update: $\alpha^{k+1}:= \alpha^k+\sum_{i\in S_k} h_i^k e_i$
\STATE Average update: $\bar \alpha^{k+1}=\bar \alpha^k+\frac{1}{\lambda n} \sum_{i\in S_k}h_i^k a_i $
\ENDFOR
\end{algorithmic}
\end{algorithm}

\subsection{SDNA vs SDCA}

We now compare the rates of SDNA and  SDCA. The next result says that {\bf the rate of SDNA is always superior to that of SDCA.} We also see that the rate is better in the quadratic case covered by Theorem~\ref{th-dual-smooth}.

\begin{theorem}
If $\Sam$ is uniform sampling with $\tau=\Exp[|\Sam|]$, then
$$
\theta(\Sam)\leq \sigma_1^{prox} \leq \sigma_1.
$$
\end{theorem}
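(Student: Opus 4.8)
The plan is to treat the two inequalities separately; it is worth noting first that both are purely matrix-analytic facts about $\theta(\Sam)$, $\sigma_1^{prox}$ and $\sigma_1$, and neither actually uses the quadratic hypothesis (that is only needed to interpret $\sigma_1$ as a rate). Throughout I exploit that $\Sam$ is uniform, so $p_i=\tfrac\tau n$ and $\bD(p)=\tfrac\tau n\bI$. It is convenient to abbreviate $\bH:=\tfrac{1}{\lambda n}\bA^\top\bA+\gamma\bI$ and $\mathbf{Q}:=\tfrac{1}{\lambda\tau\gamma}\Exp[(\bA^\top\bA)_{\Sam}]+\bI$. With this notation \eqref{eq:s_2} reads $s_1=\lambda_{\min}(\mathbf{Q}^{-1})=1/\lambda_{\max}(\mathbf{Q})$, and a one-line computation using $\Exp[\bI_{\Sam}]=\bD(p)=\tfrac\tau n\bI$ gives the key identity $\Exp[\bH_{\Sam}]=\tfrac{\tau\gamma}{n}\mathbf{Q}$.

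For the first inequality $\theta(\Sam)\le\sigma_1^{prox}$, I would feed the assumed bound \eqref{a:PM}, which for uniform $\Sam$ becomes $\Exp[(\bA^\top\bA)_{\Sam}]\preceq\tfrac\tau n\bD(v)$, into the definition of $\mathbf{Q}$ to obtain $\lambda_{\max}(\mathbf{Q})\le 1+\tfrac{\max_i v_i}{\lambda\gamma n}$, hence $s_1\ge\tfrac{\lambda\gamma n}{\lambda\gamma n+\max_i v_i}$. Multiplying by $\tfrac\tau n$ and comparing with the uniform-sampling form $\theta(\Sam)=\tfrac{\tau\lambda\gamma}{\max_i v_i+\lambda\gamma n}$ of \eqref{a:theta} yields $\tfrac\tau n s_1\ge\theta(\Sam)$; since also $\theta(\Sam)\le\tfrac\tau n$ (as $v_i>0$), both entries of $\sigma_1^{prox}=\min\{\tfrac\tau n,\tfrac\tau n s_1\}$ dominate $\theta(\Sam)$, which is exactly the claim.

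For the second inequality $\sigma_1^{prox}\le\sigma_1$, the engine is Lemma~\ref{l-dzeff} applied to $\bH$ (positive definite since $\gamma>0$): for uniform $\Sam$ it gives $\Exp[(\bH_{\Sam})^{-1}]\succeq\tfrac{\tau^2}{n^2}(\Exp[\bH_{\Sam}])^{-1}=\tfrac{\tau}{n\gamma}\mathbf{Q}^{-1}$, using the identity for $\Exp[\bH_{\Sam}]$ above. Congruence-monotonicity of $\lambda_{\min}$ (via $\lambda_{\min}(\bX^{-1}\bY)=\lambda_{\min}(\bY^{1/2}\bX^{-1}\bY^{1/2})$ together with monotonicity of $\lambda_{\min}$ under the Loewner order) then turns this into $\sigma_1=\lambda_{\min}(\Exp[(\bH_{\Sam})^{-1}]\bH)\ge\tfrac{\tau}{n\gamma}\lambda_{\min}(\mathbf{Q}^{-1}\bH)$. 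Finally, since $\bH\succeq\gamma\bI$, conjugating by $\mathbf{Q}^{-1/2}$ gives $\lambda_{\min}(\mathbf{Q}^{-1}\bH)\ge\gamma\lambda_{\min}(\mathbf{Q}^{-1})=\gamma s_1$, so $\sigma_1\ge\tfrac\tau n s_1\ge\tfrac\tau n\min\{1,s_1\}=\sigma_1^{prox}$.

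I expect the main obstacle to be organizational rather than deep. The steps needing care are (i) verifying $\Exp[\bH_{\Sam}]=\tfrac{\tau\gamma}{n}\mathbf{Q}$, which requires correctly splitting $\bH_{\Sam}=\tfrac1{\lambda n}(\bA^\top\bA)_{\Sam}+\gamma\bI_{\Sam}$ and taking $\Exp[\bI_{\Sam}]=\tfrac\tau n\bI$, and (ii) invoking Lemma~\ref{l-dzeff} on the shifted matrix $\bH$ rather than on $\bA^\top\bA$. Conceptually, the second inequality is precisely the statement that lower-bounding $\Exp[(\bH_{\Sam})^{-1}]$ by the scaled $(\Exp[\bH_{\Sam}])^{-1}$ discards exactly the curvature SDNA captures, mirroring the chain $\sigma_1^{prox}\le\sigma_2\le\sigma_1$ already noted after Theorem~\ref{prop-proximal}; I would double-check the scalar bookkeeping of the $\tfrac\tau n$, $\tfrac{\tau^2}{n^2}$ and $\tfrac{\tau\gamma}{n}$ factors, since that is the easiest place to slip.
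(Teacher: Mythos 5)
Your proof is correct and follows essentially the same route as the paper's: the first inequality comes from feeding \eqref{a:PM} into \eqref{eq:s_2} and comparing with the definition \eqref{a:theta}, and the second from applying Lemma~\ref{l-dzeff} to the shifted matrix $\tfrac{1}{\lambda n}\bA^\top\bA+\gamma\bI$ together with $\Exp[\bI_{\Sam}]=\tfrac{\tau}{n}\bI$ and the lower bound $\gamma\bI$. Your write-up is in fact somewhat more complete than the paper's (which ends with ``the rest of the argument is similar'') and implicitly corrects the direction of the paper's auxiliary inequality, which should read $\tfrac{n}{\tau}\theta(\Sam)\leq 1$ rather than $1\leq\tfrac{n}{\tau}\theta(\Sam)$.
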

\begin{proof} Since
$\Sam$ is a uniform sampling, we have $p_i=\frac{\tau}{n}$  for all $i\in [n]$. In view of \eqref{a:theta}, we have $1\leq \tfrac{n}{\tau}\theta(\Sam)$. Next,
\begin{gather*}
s_1 \overset{\eqref{eq:s_2}+\eqref{a:PM}}{\geq} \lambda_{\min}\left( \frac{1}{\tau \lambda \gamma} \bD(v)\bD( p)+\bI\right)^{-1}  \overset{\eqref{a:theta}}{=}  \frac{n}{\tau}\theta(\Sam).
\end{gather*}
Therefore, $\sigma_1^{prox} = \tfrac{\tau}{n}\min(1,s_1) \geq \theta(\Sam)$. In order to establish $\sigma_1^{prox}\leq \sigma_1$, we use Lemma~\ref{l-dzeff} and the fact that $\Exp[\bI_{\Sam}]=\tfrac{\tau}{n}\bI$ to obtain
\begin{gather*}
\tfrac{\tau}{n}\left(\tfrac{1}{\tau\gamma \lambda }\Exp[(\bA^\top \bA)_{\Sam}]+ \bI\right)^{-1}
=\tfrac{\tau^2}{n^2} \left( \Exp\left[\left(\tfrac{1}{\gamma \lambda n}\bA^\top \bA+\bI\right)_{\Sam}\right]\right)^{-1}\\ \overset{(\text{Lemma}~\ref{l-dzeff})}{\preceq} \Exp\left[ \left(\left(\tfrac{1}{\gamma \lambda n}\bA^\top \bA+\bI\right)_{\Sam}\right)^{-1}\right]\\
 \preceq \Exp\left[\left(\left(\bA^\top \bA+\gamma\lambda n \bI\right)_{\Sam}\right)^{-1}
(\bA^\top \bA+\gamma\lambda n\bI)\right],
\end{gather*}
The rest of the argument is similar.
\end{proof}

\section{SDNA as Iterative Hessian Sketch}
We now apply SDNA to the least squares problem:
\begin{equation}\label{eq:least square}
\min_{w \in \R^d}\;\; 
  \frac{1}{2n} \|\bA^\top w-b\|^2+ \frac{\lambda}{2} \|w\|^2,
 \end{equation}
 and show that the resulting primal update can be interpreted as an iterative Hessian sketch, alternative to the one proposed by~\citet{IteHeSke}. We first need to establish a simple  duality result.
 \begin{lemma}\label{l-reporfff}
 Let $\alpha^*$ be the optimal solution of
 \begin{equation}\label{eq:dualleastsquare}\min_{\alpha \in \R^{n}}\;\; 
 \frac{1}{2n}\|\alpha\|^2 -\frac{1}{n}\ve{b}{\alpha}+\frac{1}{2\lambda n^2} \|\bA \alpha\|^2,\end{equation}
 then  the optimal solution $w^*$ of~\eqref{eq:least square}   is  $
 w^*=\frac{1}{\lambda n} \bA \alpha^*.$
 \end{lemma}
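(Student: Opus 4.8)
The plan is to recognize problem~\eqref{eq:least square} as a special instance of the regularized ERM problem~\eqref{eq:primal} and then invoke the general Fenchel duality already recorded in the excerpt, in particular the primal--dual optimality relation~\eqref{a:primaldualsolution}. Concretely, I would set $\phi_i(t) = \tfrac{1}{2}(t - b_i)^2$ and $g(w) = \tfrac12\|w\|^2$, so that $\tfrac1n\sum_i \phi_i(a_i^\top w) = \tfrac{1}{2n}\|\bA^\top w - b\|^2$ and $\lambda g(w) = \tfrac{\lambda}{2}\|w\|^2$, matching~\eqref{eq:least square} term by term. These choices satisfy the standing hypotheses of Section~\ref{sec:ERM}: $g$ is $1$-strongly convex with respect to the L2 norm, and each $\phi_i$ is convex and $1$-smooth (so $\gamma = 1$).

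The first computational step is to evaluate the Fenchel conjugates. A direct calculation gives $\phi_i^*(s) = \tfrac12 s^2 + b_i s$, hence $\phi_i^*(-\alpha_i) = \tfrac12 \alpha_i^2 - b_i \alpha_i$, and $g^*(s) = \tfrac12\|s\|^2$. Substituting these into the general dual~\eqref{eq:dual} and writing $F = -D$ as in~\eqref{eq:Dual-Alternative}, I would verify that the dual objective collapses exactly to $\tfrac{1}{2n}\|\alpha\|^2 - \tfrac1n\ve{b}{\alpha} + \tfrac{1}{2\lambda n^2}\|\bA\alpha\|^2$, i.e.\ to the objective of~\eqref{eq:dualleastsquare}. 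This confirms that the $\alpha^*$ of the lemma is precisely the dual maximizer for the corresponding ERM instance.

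With the identification in place, the claimed primal--dual link is immediate from~\eqref{a:primaldualsolution}: the optimal primal point is $w^* = \nabla g^*\!\left(\tfrac{1}{\lambda n}\bA\alpha^*\right)$, and since $g^*(s) = \tfrac12\|s\|^2$ has gradient $\nabla g^*(s) = s$, this reduces to $w^* = \tfrac{1}{\lambda n}\bA\alpha^*$, as desired.

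The computation is elementary, so there is no serious obstacle; the only points demanding care are the bookkeeping of the $n$ and $\lambda$ factors in the conjugate calculations and confirming that the generic relation~\eqref{a:primaldualsolution} genuinely applies. The latter is not really an issue, since strong convexity of $g$ together with convexity of the $\phi_i$ guarantees a zero duality gap and a unique primal solution. As a fully self-contained alternative that avoids citing the general theory, one could instead write down the (unique) first-order stationarity conditions for the two strongly convex quadratics---$(\tfrac1n\bA\bA^\top + \lambda\bI)w = \tfrac1n\bA b$ for the primal and $\alpha + \tfrac{1}{\lambda n}\bA^\top\bA\alpha = b$ for the dual---and verify by direct substitution that $w^* = \tfrac{1}{\lambda n}\bA\alpha^*$ satisfies the primal condition whenever $\alpha^*$ satisfies the dual one.
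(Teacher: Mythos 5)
Your proposal is correct and follows the same route as the paper: identify \eqref{eq:least square} as the instance of \eqref{eq:primal} with $g(w)=\tfrac12\|w\|^2$ and $\phi_i(t)=\tfrac12(t-b_i)^2$, check that \eqref{eq:dualleastsquare} is the corresponding dual, and conclude via \eqref{a:primaldualsolution}. The paper's proof is just a terser version of yours, omitting the explicit conjugate computations you carry out.
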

 \begin{proof}
 Problem~\eqref{eq:least square} is a special case of~\eqref{eq:primal} for
 $
  g(w)\equiv \frac{1}{2}\|w\|^2$ and $\phi_i(a)\equiv \frac{1}{2}(a-b_i)^2$ for all $ i\in [n]$. Problem~\eqref{eq:dualleastsquare} is the dual of~\eqref{eq:least square} and the result follows from~\eqref{a:primaldualsolution}.
 \end{proof}

The interpretation of SDNA as a variant of the Iterative Hessian sketch method of \citet{IteHeSke} follows immediately from the following theorem.
\begin{theorem} 
The output sequence $\{w^k,\alpha^k\}_{k\geq 0}$ of Algorithm~\ref{alg:pd} applied on problem~\eqref{eq:least square} satisfies:
\begin{align}\notag
w^{k+1}=&\argmin_{w\in \R^d}
\{ \frac{1}{2n} \|\bS_k^\top( \bA^\top w- b) \|^2+ \frac{\lambda }{2} \|w\|^2 \\ &\qquad \qquad+\ve{w}{\frac{1}{n}\bA \bI_{S_k} \alpha^k -\lambda w^k} \},\label{a-dfsdf}
\end{align}
where $\bS_k$ denotes the $n$-by-$|S_k|$ submatrix of the identity matrix $\bI_n$ with columns in the random subset $S_k$.
\end{theorem}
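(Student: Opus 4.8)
The plan is to reduce the theorem to a single stationarity check. The objective on the right-hand side of~\eqref{a-dfsdf} is a strictly convex quadratic in $w$ (thanks to the $\tfrac{\lambda}{2}\|w\|^2$ term), so it has a unique minimizer, determined by its gradient vanishing. I would therefore only need to verify that $w^{k+1}=\tfrac{1}{\lambda n}\bA\alpha^{k+1}$ satisfies that gradient equation. The identity $w^{k+1}=\tfrac{1}{\lambda n}\bA\alpha^{k+1}$ itself follows from the algorithm: for $g(w)=\tfrac12\|w\|^2$ we have $\nabla g^*(s)=s$, so the primal and average updates of Algorithm~\ref{alg:pd} maintain $w^k=\bar\alpha^k=\tfrac{1}{\lambda n}\bA\alpha^k$ throughout.

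First I would specialize the conjugates appearing in the SDNA subproblem (Step 6). With $\phi_i(a)=\tfrac12(a-b_i)^2$ one has $\phi_i^*(u)=\tfrac12u^2+ub_i$, so the separable term becomes $\sum_{i\in S_k}[\tfrac12(\alpha_i^k+h_i)^2-(\alpha_i^k+h_i)b_i]$. Writing $\Delta:=(\Delta\alpha^k)_{S_k}=\bI_{S_k}\Delta\alpha^k$ and differentiating the subproblem coordinate-wise over $i\in S_k$ (using $\bX_{S_k}=\bI_{S_k}\bX\bI_{S_k}$), the first-order optimality condition reads
\begin{equation}\label{eq:sdna-opt}
\bI_{S_k}\bA^\top w^k+\bI_{S_k}\bX\Delta+\Delta+\bI_{S_k}\alpha^k-\bI_{S_k}b=0.
\end{equation}

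Next I would compute the gradient of the right-hand objective of~\eqref{a-dfsdf}. Using $\bS_k\bS_k^\top=\bI_{S_k}$ and $w^k=\tfrac{1}{\lambda n}\bA\alpha^k$, setting the gradient to zero yields
\begin{equation}\label{eq:ihs-opt}
\left(\lambda\bI+\tfrac1n\bA\bI_{S_k}\bA^\top\right)w=\tfrac1n\bA\alpha^k+\tfrac1n\bA\bI_{S_k}(b-\alpha^k).
\end{equation}
I would then substitute $w=w^{k+1}=\tfrac{1}{\lambda n}\bA\alpha^{k+1}$ into the left-hand side of~\eqref{eq:ihs-opt} and simplify with $\bX=\tfrac{1}{\lambda n}\bA^\top\bA$ (hence $\tfrac{1}{\lambda n^2}\bA^\top\bA=\tfrac1n\bX$ and $\bX\alpha^k=\bA^\top w^k$). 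After inserting $\alpha^{k+1}=\alpha^k+\Delta$, the difference between the two sides of~\eqref{eq:ihs-opt} factors as $\tfrac1n\bA$ times precisely the left-hand side of~\eqref{eq:sdna-opt}; since~\eqref{eq:sdna-opt} holds by optimality of $\Delta$, that difference is zero. By uniqueness of the minimizer, $w^{k+1}$ is the claimed solution.

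I expect the main obstacle to be purely the bookkeeping of the selection and projection operators $\bI_{S_k}$ and $\bS_k\bS_k^\top$, together with the several scaling constants $\tfrac1n,\tfrac{1}{\lambda n},\tfrac{1}{\lambda n^2}$. The substance of the argument is the exact algebraic coincidence of~\eqref{eq:ihs-opt}, evaluated at $w^{k+1}$, with $\tfrac1n\bA$ applied to~\eqref{eq:sdna-opt}; the only genuine care needed is to confirm that the two optimality systems match coordinate-for-coordinate on $S_k$ once the constants are cleared, and that applying $\tfrac1n\bA$ to~\eqref{eq:sdna-opt} preserves the equality rather than merely giving a sufficient condition.
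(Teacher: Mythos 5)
Your proof is correct, and it takes a genuinely different route from the paper's. The paper recognizes the SDNA subproblem at step $k$ as itself being the dual of a small ``sketched'' least squares problem in $w$, invokes Lemma~\ref{l-reporfff} to read off the corresponding primal minimizer as $\tfrac{1}{\lambda n}\bA\bS_k\bS_k^\top\Delta\alpha^k=\bar\alpha^{k+1}-\bar\alpha^k=w^{k+1}-w^k$, and then rewrites the resulting $\argmin$ into the form~\eqref{a-dfsdf}. You instead bypass the duality lemma entirely and perform a direct stationarity check: both the SDNA subproblem (strongly convex, since $\phi_i^*$ is $1$-strongly convex here) and the objective in~\eqref{a-dfsdf} (strictly convex via the $\tfrac{\lambda}{2}\|w\|^2$ term) have unique minimizers characterized by their first-order conditions, and your algebra correctly exhibits the residual of~\eqref{eq:ihs-opt} at $w^{k+1}=\tfrac{1}{\lambda n}\bA\alpha^{k+1}$ as $\tfrac1n\bA$ applied to the left-hand side of~\eqref{eq:sdna-opt}, which vanishes; the implication runs in the direction you need (a candidate satisfying the gradient equation of a strictly convex function is the minimizer), so no converse is required. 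I verified the details: $\phi_i^*(u)=\tfrac12u^2+ub_i$, the invariant $w^k=\bar\alpha^k=\tfrac{1}{\lambda n}\bA\alpha^k$, the identities $\bS_k\bS_k^\top=\bI_{S_k}$, $\bX_{S_k}\Delta=\bI_{S_k}\bX\Delta$ for $\Delta$ supported on $S_k$, and $\bX\alpha^k=\bA^\top w^k$ all check out. What each approach buys: yours is self-contained and elementary, at the cost of heavier bookkeeping with selection operators and scaling constants; the paper's is shorter once Lemma~\ref{l-reporfff} is available and makes the Iterative Hessian Sketch interpretation more transparent, since the subproblem is literally identified as the dual of a sketched least squares problem.
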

\begin{proof}
We know that $\bS_k^\top\Delta \alpha^k $ is the optimal solution of
 \begin{equation*}\min_{h \in \R^{\tau}}
 \frac{1}{2}\|h\|^2+\ve{\bS_k^\top(\bA^\top w^k+\alpha^k-b)}{ h}+\frac{1}{2\lambda n} \|\bA \bS_k h\|^2
 \end{equation*} Let $\tau=|S_k|$.
By Lemma~\ref{l-reporfff},  the optimal solution of
 \begin{equation*}
\min_{w \in \R^d} 
  \frac{1}{2|S_k|} \|\bS_k^\top\bA^\top w+\bS_k^\top (\bA^\top w^k+\alpha^k-b)\|^2+ \frac{\lambda n}{2|S_k|} \|w\|^2,
 \end{equation*}
 is given by $\frac{1}{\lambda n} \bA \bS_k \bS_k^\top \nabla \alpha^k$,
 which equals $\bar \alpha^{k+1}-\bar \alpha^k$ and thus equals $w^{k+1}-w^k$. Hence,
\[
w^{k+1}=\argmin_{w\in \R^d}
\{ \tfrac{1}{2n} \|\bS_k^\top( \bA^\top w+  \alpha^k- b) \|^2 + \tfrac{\lambda }{2} \|w-w^k\|^2 \},
\]
which is equivalent to~\eqref{a-dfsdf} since $(\bI_n)_{S_k}=\bS_k \bS_k^\top$.
\end{proof}

\section{Numerical Experiments}

In our first experiment (Figure~\ref{fig:first}) we compare SDNA and our new minibatch version of SDCA on one real (\verb"mushrooms"; $d=112$, $n=8,124$) and one synthetic ($d=1,024$, $n=2,048$) dataset. In both cases, we used  $\lambda=1/n$ as the regularization parameter and $g(w)=\tfrac{1}{2}\|w\|^2$.  As $\tau$ increases, SDNA requires less passes over data (epochs), while SDCA requires more passes over data. It can be shown that this behavior can be predicted from the complexity results for these two methods. The difference in performance depends on the choice of the dataset and can be quite dramatic. 

\begin{figure}
\includegraphics[width=4cm]{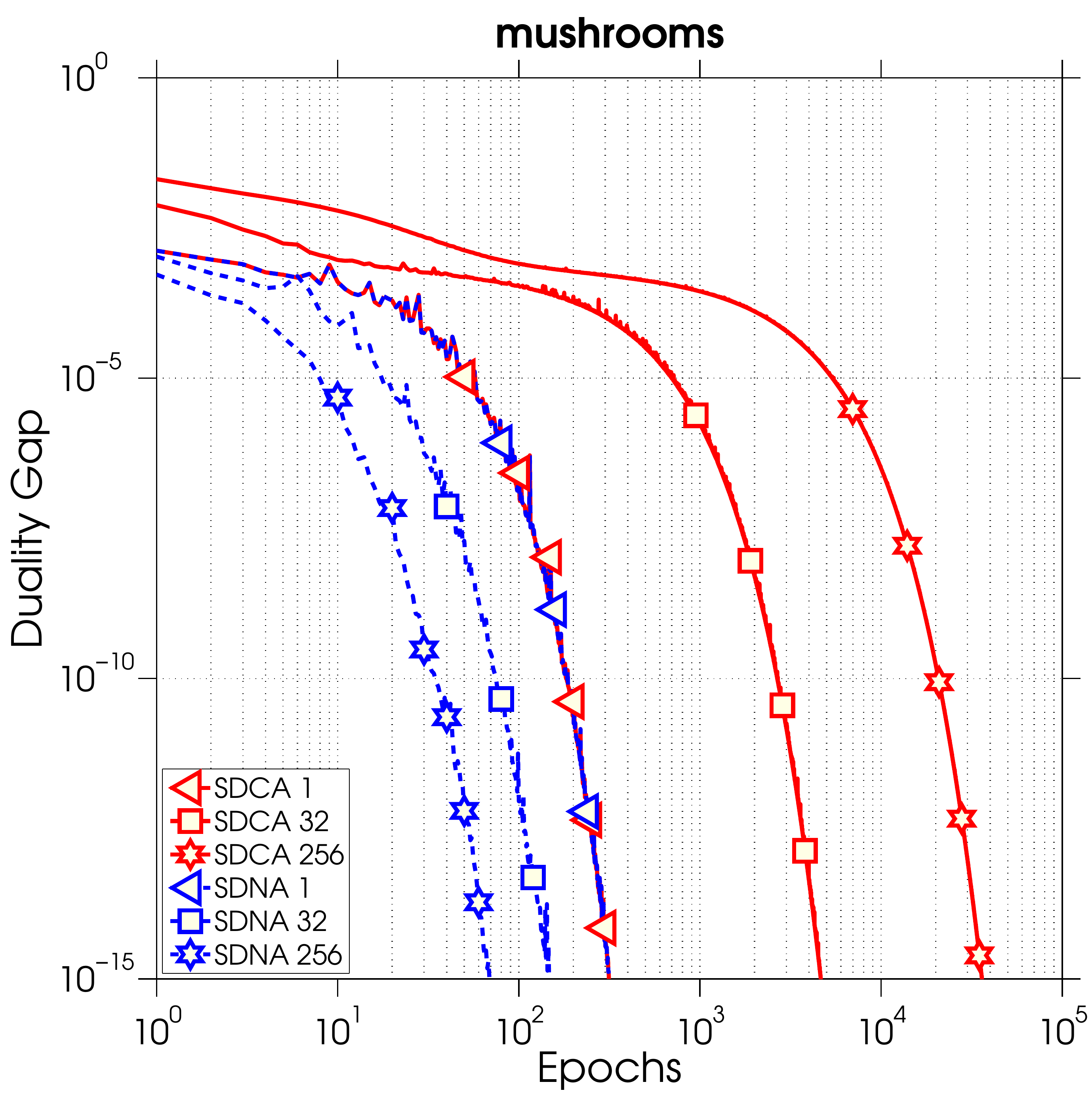} 
\includegraphics[width=4cm]{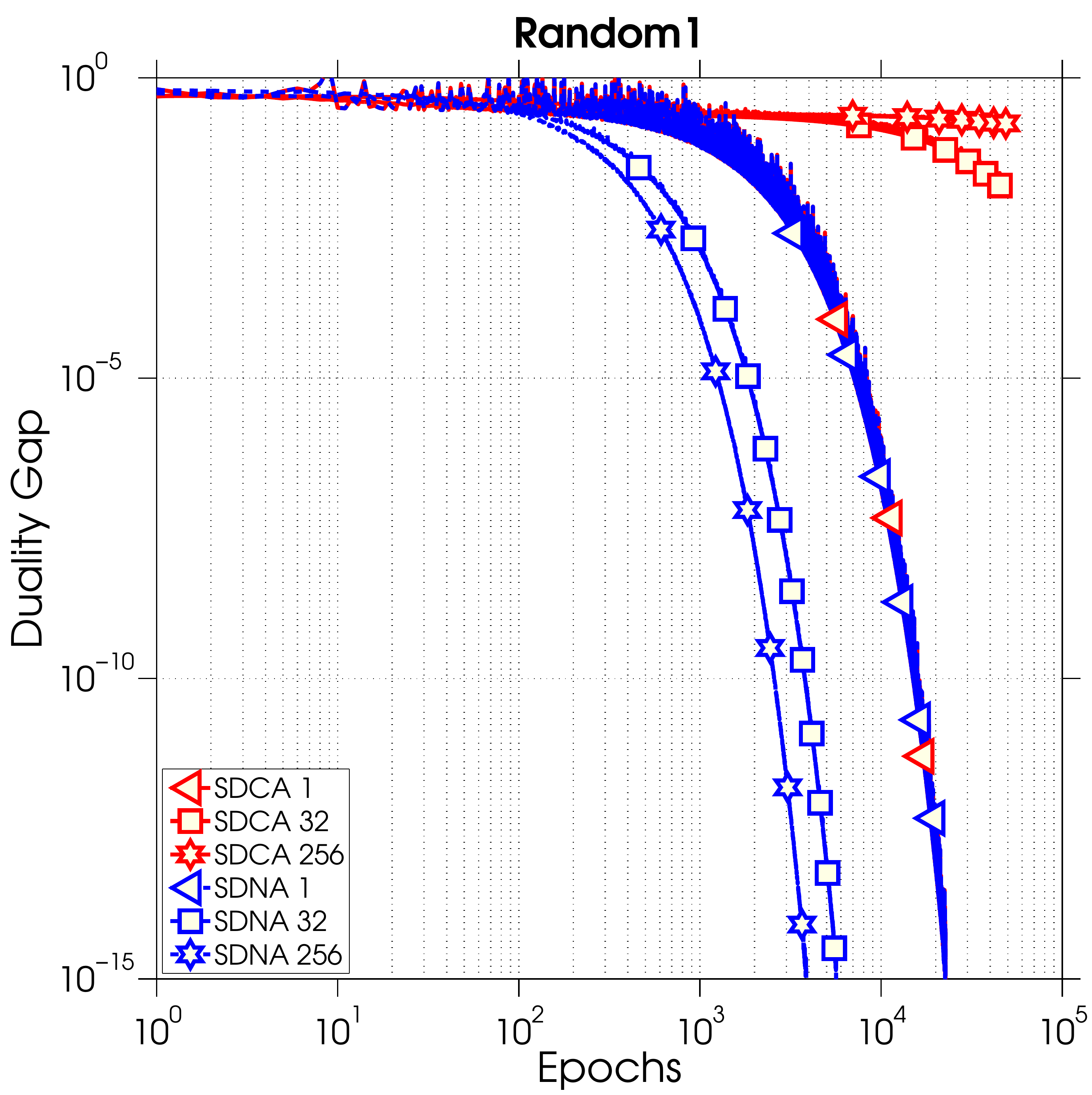}
\caption{\scriptsize Comparison of SDNA and SDCA for minibatch sizes $\tau=1,32,256$ on a real (left) and synthetic (right) dataset. The methods coincide for $\tau=1$. }
\label{fig:first}
\end{figure}

In the second experiment (Figure~\ref{fig:workload}), we investigate how much time it takes for the methods to process a single epoch, using the same datasets as before. As $\tau$ increases, SDNA does more work as the subproblems it needs to solve in each iteration involve a $\tau\times \tau$ submatrix of the Hessian of the smooth part of the dual objective function. On the other hand, the work SDCA needs to do is much smaller, and does nearly does not increase with the minibatch size $\tau$. This is because the subproblems are separable. As before, all experiments are done using a single core (however, both methods would benefit from a parallel implementation).  

\begin{figure}
\includegraphics[width=4cm]{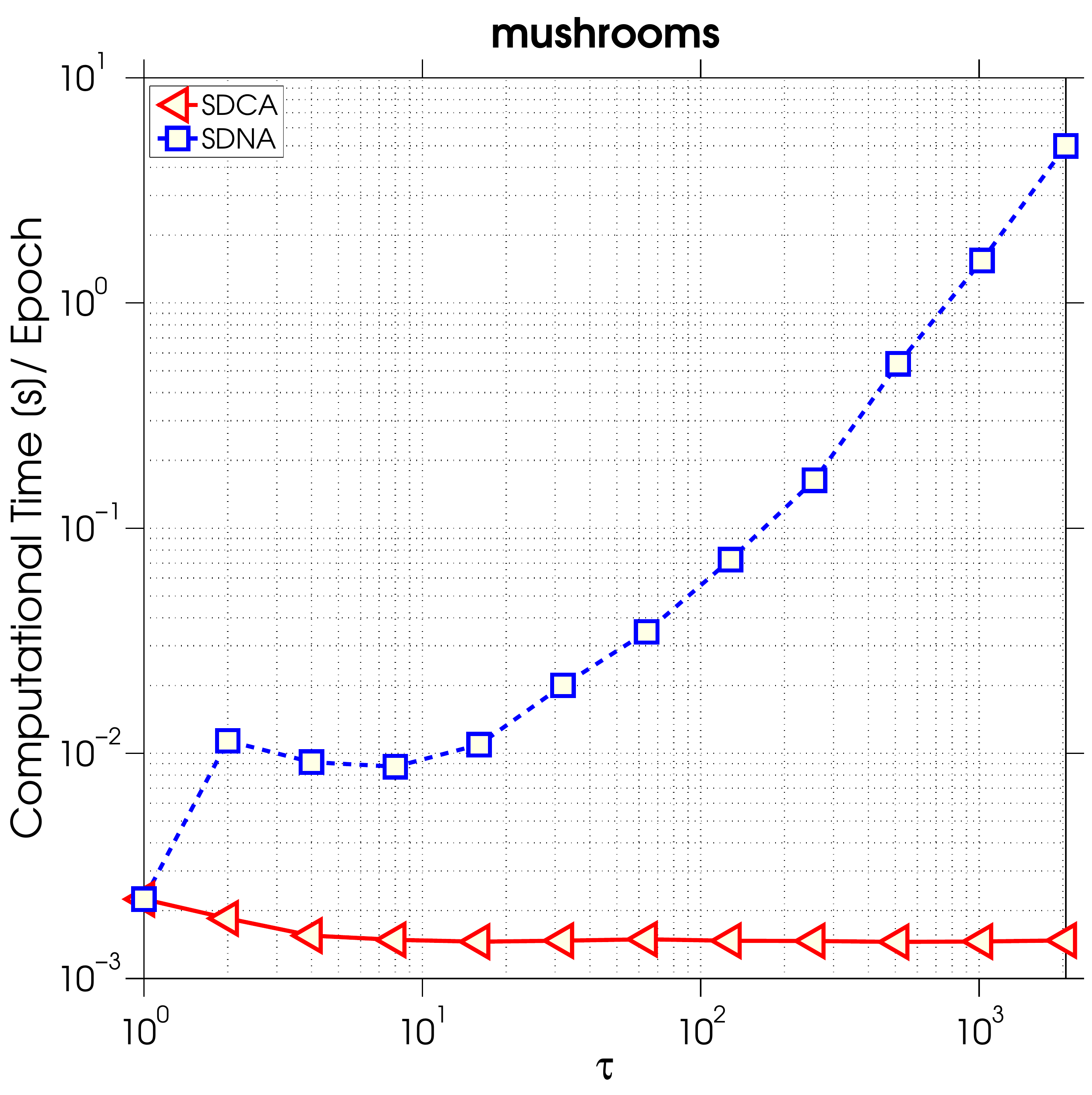} 
\includegraphics[width=4cm]{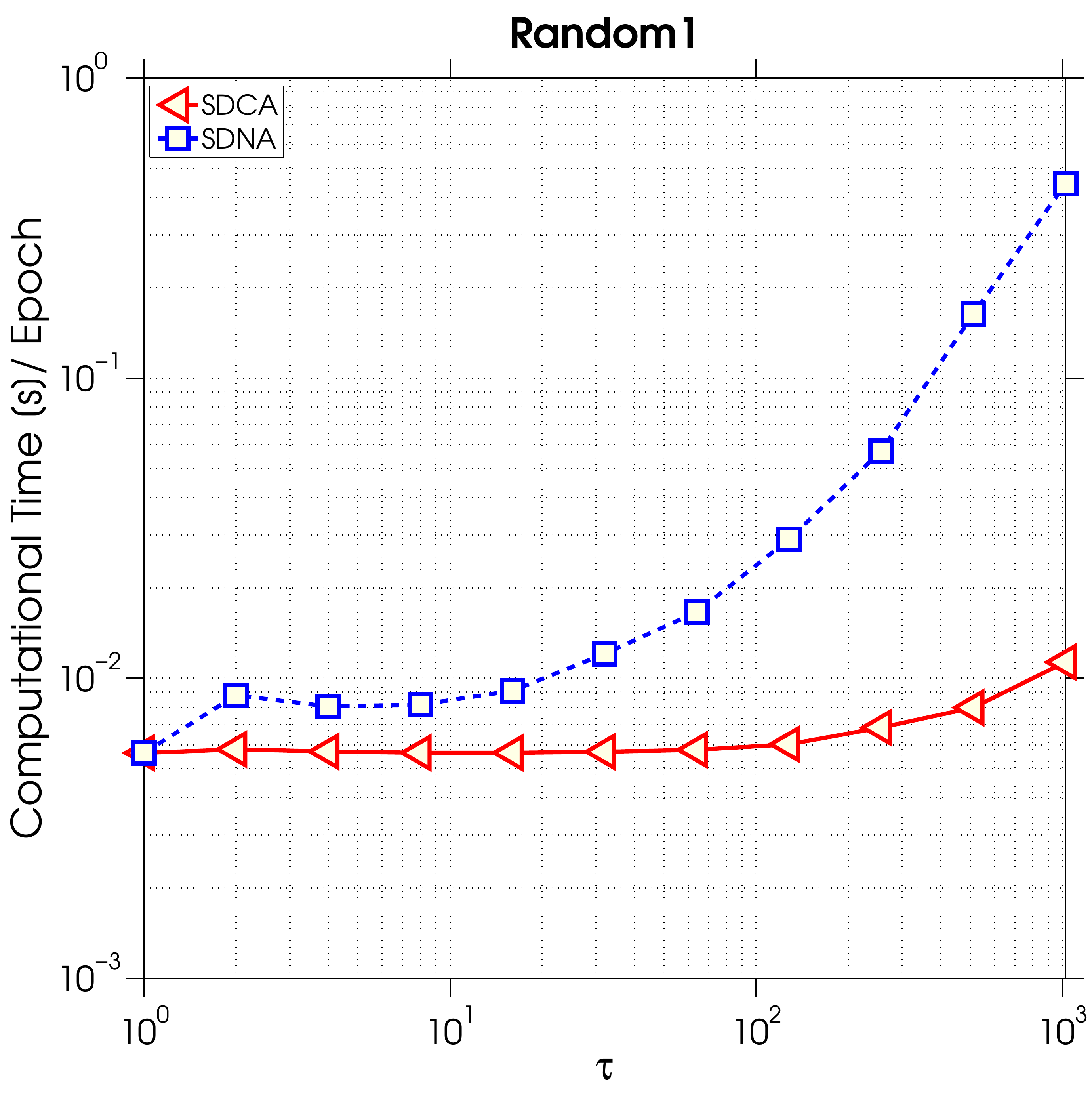}
\caption{
\scriptsize
 Time it takes for SDNA and SDCA to process a singe epoch as a function of the minibatch size $\tau$. }
\label{fig:workload}
\end{figure}

Finally, in Figure~\ref{fig:runtime}  we put the insights gained from the previous two experiments together: we look at the performance of SDNA for various choices of $\tau$ by comparing runtime and duality gap error.  We should expect that increasing $\tau$ would lead to faster method in terms of passes over data, but that this would also lead to slower iterations. The question is, is does the gain outweight the loss? The answer is: yes, for small enough minibatch sizes. Indeed, looking at Figure~\ref{fig:runtime}, we see that the runtime of SDNA improved up to the point $\tau=16$ for both datasets, and then starts to deteriorate. In situations where it is costly to fetch data from memory to a (fast) processor, much larger minibatch sizes would be optimal.

\begin{figure}
\includegraphics[width=4cm]{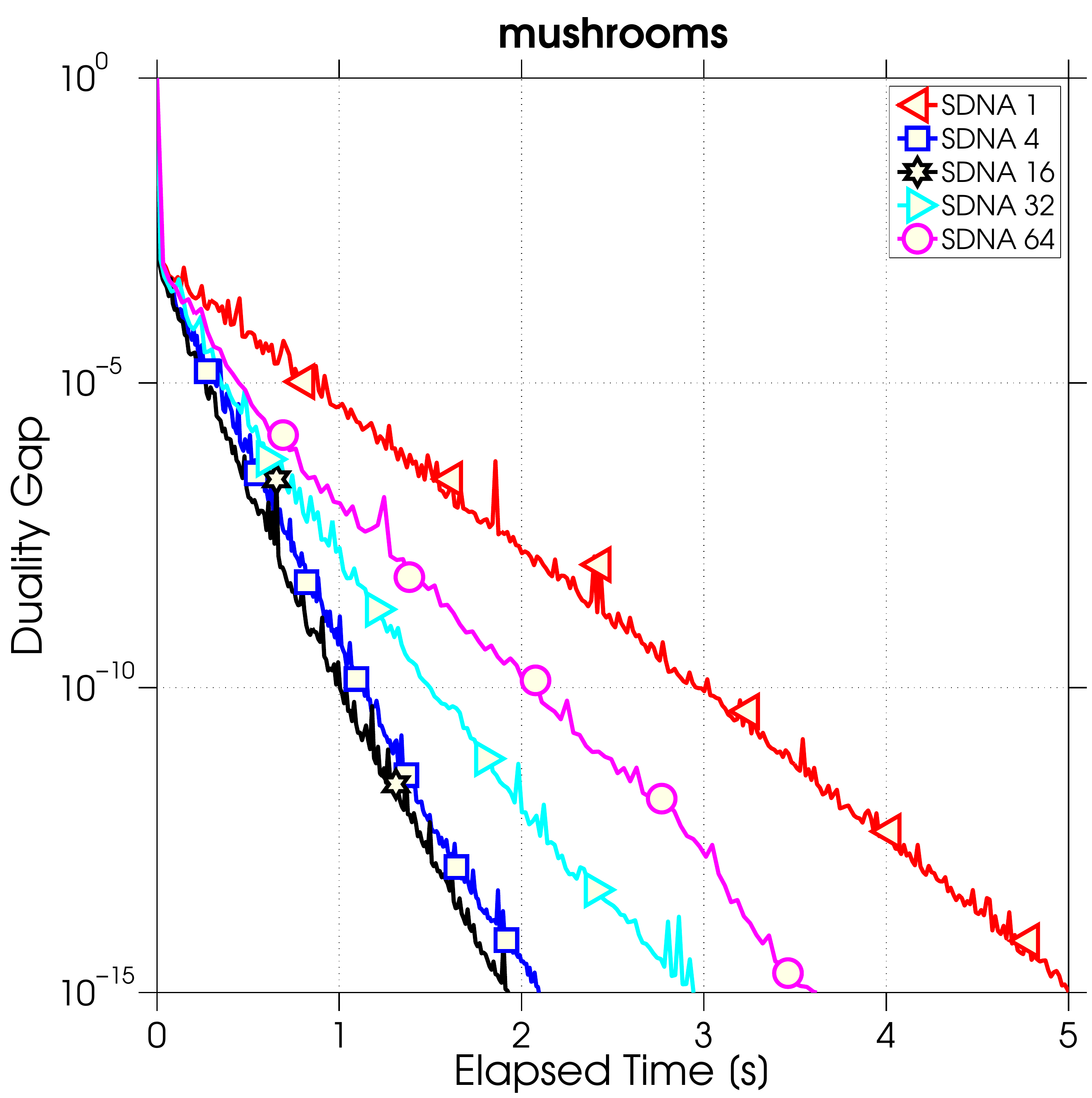} 
\includegraphics[width=4cm]{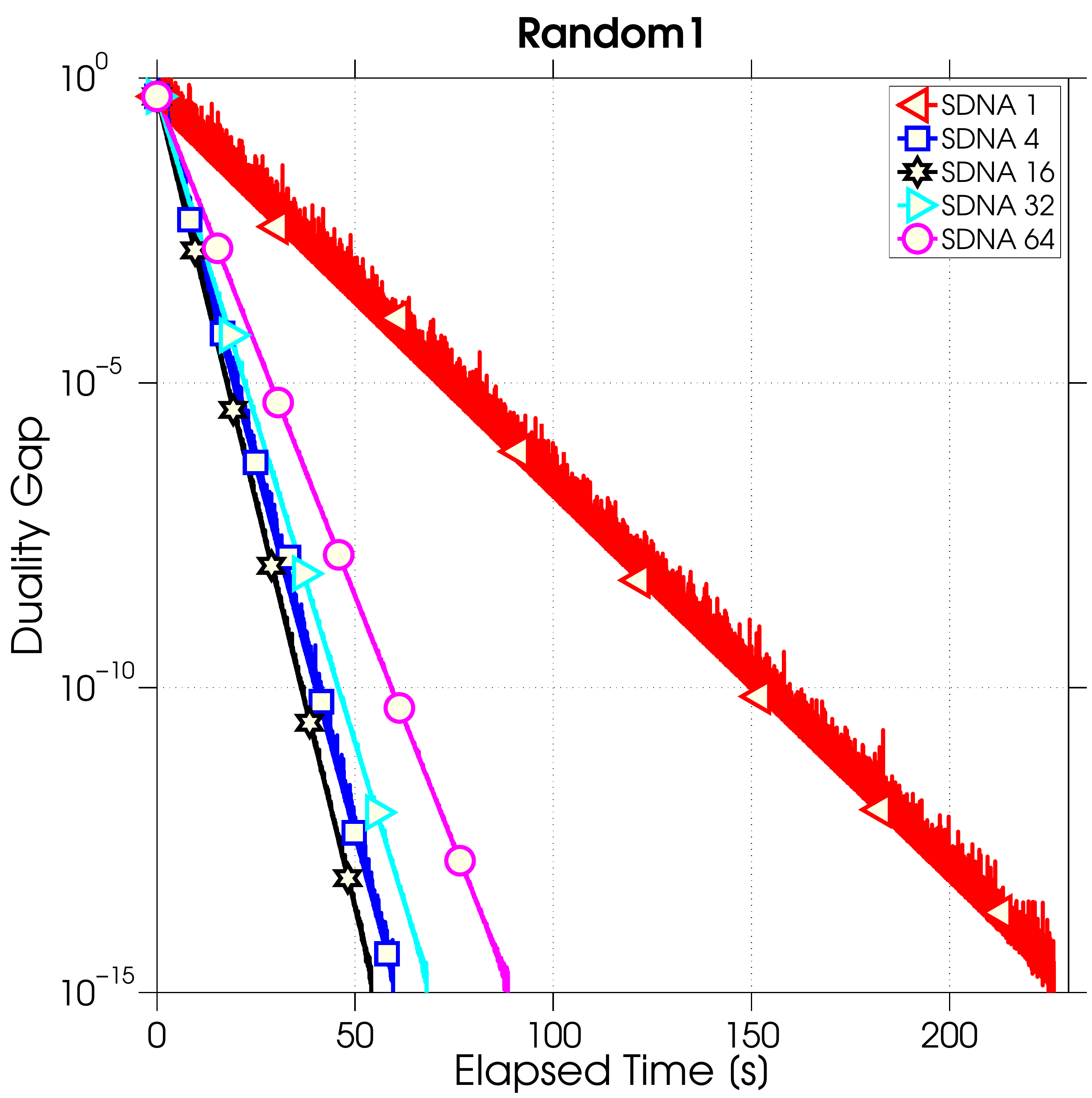}
\caption{\scriptsize Runtime of SDNA for minibatch sizes $\tau=1,4,16,32,64$ .}
\label{fig:runtime}
\end{figure}


\bibliographystyle{icml2015} 
\bibliography{biblio}

\begin{thebibliography}{35}
\providecommand{\natexlab}[1]{#1}
\providecommand{\url}[1]{\texttt{#1}}
\expandafter\ifx\csname urlstyle\endcsname\relax
  \providecommand{\doi}[1]{doi: #1}\else
  \providecommand{\doi}{doi: \begingroup \urlstyle{rm}\Url}\fi

\bibitem[Bordes et~al.(2009)Bordes, Bottou, and Gallinari]{SGD-QN-2009}
Bordes, Antoine, Bottou, L\'{e}on, and Gallinari, Patrick.
\newblock Sgd-qn: Careful quasi-newton stochastic gradient descent.
\newblock \emph{JMLR}, 10:\penalty0 1737--1754, 2009.

\bibitem[Byrd et~al.(2014)Byrd, Hansen, Nocedal, and Singer]{StochQNewton}
Byrd, R.H., Hansen, S.L., Nocedal, Jorge, and Singer, Yoram.
\newblock A stochastic quasi-newton method for large-scale optimization.
\newblock \emph{arXiv:1401.7020}, 2014.

\bibitem[Defazio et~al.(2014)Defazio, Bach, and Lacoste-Julien]{SAGA}
Defazio, Aaron, Bach, Francis, and Lacoste-Julien, Simon.
\newblock Saga: A fast incremental gradient method with support for
  non-strongly convex composite objectives.
\newblock \emph{arXiv:1407.0202}, 2014.

\bibitem[Fercoq \& Richt\'{a}rik(2013{\natexlab{a}})Fercoq and
  Richt\'{a}rik]{APPROX}
Fercoq, Olivier and Richt\'{a}rik, Peter.
\newblock Accelerated, parallel and proximal coordinate descent.
\newblock \emph{SIAM Journal on Optimization (after minor revision),
  arXiv:1312.5799}, 2013{\natexlab{a}}.

\bibitem[Fercoq \& Richt\'{a}rik(2013{\natexlab{b}})Fercoq and
  Richt\'{a}rik]{SPCDMx}
Fercoq, Olivier and Richt\'{a}rik, Peter.
\newblock Smooth minimization of nonsmooth functions by parallel coordinate
  descent.
\newblock \emph{arXiv:1309.5885}, 2013{\natexlab{b}}.

\bibitem[Fercoq et~al.(2014)Fercoq, Qu, Richt\'{a}rik, and
  Tak{\'a}{\v{c}}]{Hydra2}
Fercoq, Olivier, Qu, Zheng, Richt\'{a}rik, Peter, and Tak{\'a}{\v{c}}, Martin.
\newblock Fast distributed coordinate descent for minimizing non-strongly
  convex losses.
\newblock \emph{IEEE International Workshop on Machine Learning for Signal
  Processing}, 2014.

\bibitem[Fountoulakis \& Tappenden(2014)Fountoulakis and Tappenden]{RobustBCD}
Fountoulakis, Kimon and Tappenden, Rachael.
\newblock Robust block coordinate descent.
\newblock \emph{arXiv:1407.7573}, 2014.

\bibitem[Johnson \& Zhang(2013)Johnson and Zhang]{SVRG}
Johnson, Rie and Zhang, Tong.
\newblock Accelerating stochastic gradient descent using predictive variance
  reduction.
\newblock In \emph{NIPS}, 2013.

\bibitem[Kone\v{c}n\'{y} \& Richt\'{a}rik(2014)Kone\v{c}n\'{y} and
  Richt\'{a}rik]{S2GD}
Kone\v{c}n\'{y}, Jakub and Richt\'{a}rik, Peter.
\newblock {S2GD}: {S}emi-stochastic gradient descent methods.
\newblock \emph{arXiv:1312.1666}, 2014.

\bibitem[Kone\v{c}n\'{y} et~al.(2014{\natexlab{a}})Kone\v{c}n\'{y}, Lu,
  Richt\'{a}rik, and Tak\'{a}\v{c}]{mS2GD}
Kone\v{c}n\'{y}, Jakub, Lu, Jie, Richt\'{a}rik, Peter, and Tak\'{a}\v{c},
  Martin.
\newblock m{S2GD}: {M}ini-batch semi-stochastic gradient descent in the
  proximal setting.
\newblock \emph{arXiv:1410.4744}, 2014{\natexlab{a}}.

\bibitem[Kone\v{c}n\'{y} et~al.(2014{\natexlab{b}})Kone\v{c}n\'{y}, Qu, and
  Richt\'{a}rik]{S2CD}
Kone\v{c}n\'{y}, Jakub, Qu, Zheng, and Richt\'{a}rik, Peter.
\newblock Semi-stochastic coordinate descent.
\newblock \emph{arXiv:1412.6293}, 2014{\natexlab{b}}.

\bibitem[Lin et~al.(2014)Lin, Lu, and Xiao]{APCG}
Lin, Qihang, Lu, Zhaosong, and Xiao, Lin.
\newblock An accelerated proximal coordinate gradient method and its
  application to regularized empirical risk minimization.
\newblock Technical Report MSR-TR-2014-94, Microsoft Research, July 2014.

\bibitem[Mairal(2014)]{MISO}
Mairal, Julien.
\newblock Incremental majorization-minimization optimization with application
  to large-scale machine learning.
\newblock \emph{arXiv:1402.4419}, 2014.

\bibitem[Pilanci \& Wainwright(2014)Pilanci and Wainwright]{IteHeSke}
Pilanci, Mert and Wainwright, Martin~J.
\newblock Iterative {H}essian sketch: Fast and accurate solution approximation
  for constrained least-squares.
\newblock \emph{arXiv:1411.0347}, 2014.

\bibitem[Qu \& Richt\'{a}rik(2014{\natexlab{a}})Qu and Richt\'{a}rik]{ALPHA}
Qu, Zheng and Richt\'{a}rik, Peter.
\newblock Coordinate descent with arbitrary sampling {I}: Algorithms and
  complexity.
\newblock \emph{arXiv:1412.8060}, 2014{\natexlab{a}}.

\bibitem[Qu \& Richt\'{a}rik(2014{\natexlab{b}})Qu and Richt\'{a}rik]{ESO}
Qu, Zheng and Richt\'{a}rik, Peter.
\newblock Coordinate descent methods with arbitrary sampling {\rom{2}}:
  Expected separable overapproximation.
\newblock \emph{arXiv:1412.8063}, 2014{\natexlab{b}}.

\bibitem[Qu et~al.(2014)Qu, Richt\'{a}rik, and Zhang]{QUARTZ}
Qu, Zheng, Richt\'{a}rik, Peter, and Zhang, Tong.
\newblock Randomized dual coordinate ascent with arbitrary sampling.
\newblock \emph{arXiv:1411.5873}, 2014.

\bibitem[Richt{\'a}rik \& Tak{\'a}{\v{c}}(2013{\natexlab{a}})Richt{\'a}rik and
  Tak{\'a}{\v{c}}]{Hydra}
Richt{\'a}rik, Peter and Tak{\'a}{\v{c}}, Martin.
\newblock Distributed coordinate descent method for learning with big data.
\newblock \emph{arXiv:1310.2059}, 2013{\natexlab{a}}.

\bibitem[Richt{\'a}rik \& Tak{\'a}{\v{c}}(2013{\natexlab{b}})Richt{\'a}rik and
  Tak{\'a}{\v{c}}]{NSync}
Richt{\'a}rik, Peter and Tak{\'a}{\v{c}}, Martin.
\newblock On optimal probabilities in stochastic coordinate descent methods.
\newblock \emph{arXiv:1310.3438}, 2013{\natexlab{b}}.

\bibitem[Richt\'{a}rik \& Tak\'{a}\v{c}(2012)Richt\'{a}rik and
  Tak\'{a}\v{c}]{PCDM}
Richt\'{a}rik, Peter and Tak\'{a}\v{c}, Martin.
\newblock Parallel coordinate descent methods for big data optimization
  problems.
\newblock \emph{Mathematical Programming (after minor revision),
  arXiv:1212.0873}, 2012.

\bibitem[Richt\'{a}rik \& Tak\'{a}\v{c}(2014)Richt\'{a}rik and
  Tak\'{a}\v{c}]{UCDC}
Richt\'{a}rik, Peter and Tak\'{a}\v{c}, Martin.
\newblock Iteration complexity of randomized block-coordinate descent methods
  for minimizing a composite function.
\newblock \emph{Mathematical Programming}, 144\penalty0 (2):\penalty0 1--38,
  2014.

\bibitem[Schmidt et~al.(2013)Schmidt, Le~Roux, and Bach]{SAG}
Schmidt, Mark, Le~Roux, Nicolas, and Bach, Francis.
\newblock Minimizing finite sums with the stochastic average gradient.
\newblock \emph{arXiv:1309.2388}, 2013.

\bibitem[Schraudolph et~al.(2007)Schraudolph, Yu, and G\"{u}nter]{Stoch-LBFGS}
Schraudolph, Nicol~N., Yu, Jin, and G\"{u}nter, Simon.
\newblock A stochastic quasi-newton method for online convex optimization.
\newblock In \emph{AISTATS}, pp.\  433--440, 2007.

\bibitem[Shalev-Shwartz \& Ben-David(2014)Shalev-Shwartz and
  Ben-David]{UML:book}
Shalev-Shwartz, Shai and Ben-David, Shai.
\newblock \emph{Understanding Machine Learning: From Theory to Algorithms}.
\newblock Cambridge University Press, 2014.

\bibitem[Shalev-Shwartz \& Zhang(2013{\natexlab{a}})Shalev-Shwartz and
  Zhang]{APROXSDCA}
Shalev-Shwartz, Shai and Zhang, Tong.
\newblock Accelerated mini-batch stochastic dual coordinate ascent.
\newblock In \emph{NIPS}, pp.\  378--385, 2013{\natexlab{a}}.

\bibitem[Shalev-Shwartz \& Zhang(2013{\natexlab{b}})Shalev-Shwartz and
  Zhang]{ASDCA}
Shalev-Shwartz, Shai and Zhang, Tong.
\newblock Accelerated mini-batch stochastic dual coordinate ascent.
\newblock In \emph{NIPS}, pp.\  378--385, 2013{\natexlab{b}}.

\bibitem[Shalev-Shwartz \& Zhang(2013{\natexlab{c}})Shalev-Shwartz and
  Zhang]{MinibatchASDCA}
Shalev-Shwartz, Shai and Zhang, Tong.
\newblock Accelerated proximal stochastic dual coordinate ascent for
  regularized loss minimization.
\newblock \emph{arXiv:1309.2375}, 2013{\natexlab{c}}.

\bibitem[Shalev-Shwartz \& Zhang(2013{\natexlab{d}})Shalev-Shwartz and
  Zhang]{SDCA}
Shalev-Shwartz, Shai and Zhang, Tong.
\newblock Stochastic dual coordinate ascent methods for regularized loss.
\newblock \emph{The Journal of Machine Learning Research}, 14\penalty0
  (1):\penalty0 567--599, 2013{\natexlab{d}}.

\bibitem[Shalev-Shwartz et~al.(2011)Shalev-Shwartz, Singer, Srebro, and
  Cotter]{Pegasos-MAPR}
Shalev-Shwartz, Shai, Singer, Yoram, Srebro, Nati, and Cotter, Andrew.
\newblock Pegasos: Primal estimated sub-gradient solver for {SVM}.
\newblock \emph{Mathematical Programming}, pp.\  3--30, 2011.

\bibitem[Sohl-Dickstein et~al.(2014)Sohl-Dickstein, Poole, and Ganguli]{SGD-QN}
Sohl-Dickstein, Jascha, Poole, Ben, and Ganguli, Surya.
\newblock Fast large-scale optimization by unifying stochastic gradient and
  quasi-newton methods.
\newblock In \emph{ICML}, 2014.

\bibitem[Tak{\'a}{\v{c}} et~al.(2013)Tak{\'a}{\v{c}}, Bijral, Richt{\'a}rik,
  and Srebro]{Pegasos2}
Tak{\'a}{\v{c}}, Martin, Bijral, Avleen, Richt{\'a}rik, Peter, and Srebro,
  Nathan.
\newblock Mini-batch primal and dual methods for {SVM}s.
\newblock In \emph{ICML}, 2013.

\bibitem[Tappenden et~al.(2013)Tappenden, Richt\'{a}rik, and Gondzio]{ICD}
Tappenden, Rachael, Richt\'{a}rik, Peter, and Gondzio, Jacek.
\newblock Inexact block coordinate descent method: complexity and
  preconditioning.
\newblock \emph{arXiv:1304.5530}, 2013.

\bibitem[Tappenden et~al.(2014)Tappenden, Richt\'{a}rik, and B\"{u}ke]{DQA}
Tappenden, Rachael, Richt\'{a}rik, Peter, and B\"{u}ke, Burak.
\newblock Separable approximations and decomposition methods for the augmented
  lagrangian.
\newblock \emph{Optimization Methods and Software}, 2014.

\bibitem[Xiao \& Zhang(2014)Xiao and Zhang]{proxSVRG}
Xiao, Lin and Zhang, Tong.
\newblock A proximal stochastic gradient method with progressive variance
  reduction.
\newblock \emph{arXiv:1403.4699}, 2014.

\bibitem[Zhao \& Zhang(2014)Zhao and Zhang]{IProx-SDCA}
Zhao, Peilin and Zhang, Tong.
\newblock Stochastic optimization with importance sampling.
\newblock \emph{arXiv:1401.2753}, 2014.

\end{thebibliography}
 
\clearpage
\onecolumn

\section*{APPENDIX: Proof of Theorem~\ref{prop-proximal}}

It follows directly from Assumption~\ref{ass:smoothness} and the update rule $x^{k+1}=x^k+(h^k)_{S_k}$ in Algorithm~\ref{alg:a1} that:
\begin{eqnarray*}LHS &:=& f(x^{k+1})+\sum_{i=1}^n \psi_i(x^{k+1}_i)-f(x^k) -\sum_{i\notin S_k} \psi_i(x_i^k) \\
&
 \leq & \ve{\nabla f(x^k)}{(h^k)_{S_k}}+\frac{1}{2}\ve{h^k}{\bX_{S_k} h^k} 
+\sum_{i\in S_k}\psi_i(x_i^k+h_i^k).
\end{eqnarray*}

Since $h^k$ is defined as the minimizer of the right hand side in the last inequality, we can further bound this term by replacing $h^k$ with  $h = \lambda(x^*-x^k)$ for  arbitrary $\lambda \in [0,1]$:
\begin{equation}\label{eq:09j09986} LHS
 \leq  \lambda \ve{(\nabla f(x^k))_{S_k}}{x^*-x^k}
+\sum_{i\in S_k}\psi_i(x_i^k+\lambda(x_i^*-x_i^k)) +\frac{\lambda^2}{2}\ve{x^*-x^k}{\bX_{S_k}
(x^*-x^k)} .
\end{equation}
Now we use the fact that $\psi_i$ is $\gamma_i$-strongly convex  to obtain:
\begin{eqnarray*}
F(x^{k+1})-F(x^k) &=&
 f(x^{k+1})+\sum_{i=1}^n \psi_i(x^{k+1}_i)-f(x^k)-\sum_{i=1}^n\psi_i(x_i^k)\\
&\overset{\eqref{eq:09j09986}}{\leq} & \lambda\ve{(\nabla f(x^k))_{S_k}}{x^*-x^k}
+\lambda\sum_{i\in S_k}[\psi_i(x_i^*)-\psi_i(x_i^k)] \\
&& \qquad  -\frac{\lambda(1-\lambda)}{2}
\ve{x^*-x^k}{\bD(\gamma)_{S_k}
(x^*-x^k)} +\frac{\lambda^2}{2}\ve{x^*-x^k}{\bX_{S_k}
(x^*-x^k)}.
\end{eqnarray*}
By taking expectations in $S_k$ on both sides of the last inequality, we see that for any $\lambda \in [0,1]$, the following holds:
\begin{eqnarray*}
 \Exp_k[F(x^{k+1}) - F(x^k)]
& \leq & 
\frac{\lambda\tau}{n}\left(\ve{(\nabla f(x^k))}{x^*-x^k}+ \sum_{i=1}^n \left(\psi_i(x_i^*)-
\psi_i(x_i^k)\right) \right) \\
&& \quad -\frac{\lambda(1-\lambda)}{2}
\ve{x^*-x^k}{\Exp\left[\bD(\gamma)_{\Sam}\right]
(x^*-x^k)} +\frac{\lambda^2}{2}\ve{x^*-x^k}{\Exp\left[\bX_{\Sam}\right]
(x^*-x^k)}\\
& \leq &
\frac{\lambda\tau}{n}\left(F(x^*)-F(x^k)-\frac{1}{2}\ve{x^*-x^k}{\bG(x^*-x^k)}\right) \\ 
&& \quad +\frac{\lambda^2}{2}\ve{x^*-x^k}{\Exp\left[\bX_{\Sam}+\bD(\gamma)_{\Sam}\right]
(x^*-x^k)}- \frac{\lambda}{2}\ve{x^*-x^k}{ \Exp\left[\bD(\gamma)_{\Sam}\right]
(x^*-x^k)}\\
& \leq  &
\frac{\lambda\tau}{n}\left(F(x^*)-F(x^k)\right)-\frac{\lambda}{2}\ve{x^*-x^k}{\frac{\tau}{n}(\bD(\gamma)+\bG)(x^*-x^k)} \\
&& \quad +\frac{\lambda^2}{2}\ve{x^*-x^k}{\Exp\left[\bX_{\Sam}+\frac{\tau}{n}\bD(\gamma)\right]
(x^*-x^k)},
\end{eqnarray*}
where the second to last inequality follows from Assumption~\ref{a:strong_convexity} and in the last one we used the fact that $\Exp[\bD(\gamma)_{\Sam}]=\frac{\tau}{n}\bD(\gamma)$. 
It remains to replace $\lambda$ by $\min(1,s)$.

\section*{APPENDIX: Proof of Lemma~\ref{l-pdgd}}

Recall that $\bM=\tfrac{1}{n}\bX$, where $\bX=\tfrac{1}{\lambda n}\bA^\top \bA$.  

For simplicity in this proof we write $\theta =\theta(\Sam)$.
First, by the 1-strong convexity of the function $g$ we obtain the 1-smoothness of the function $g^*$, from which we deduce:
\begin{align*}
 -\lambda g^*(\bar \alpha^{k+1})+\lambda g^*(\bar \alpha^k)  +\lambda \ve{\nabla g^*(\bar \alpha^k)}{\bar \alpha^{k+1}-
\bar \alpha^k} \geq-\frac{\lambda}{2}\ve{\bar\alpha^{k+1} -\bar \alpha^k}{\bar\alpha^{k+1}-\bar \alpha^k}.
\end{align*}
Now we replace $\nabla g^*(\bar \alpha^k)$ by $w^k$ and $\bar \alpha $ by $\frac{1}{\lambda n} \bA \alpha$ to obtain: 
\begin{eqnarray*}
D(\alpha^{k+1})-D(\alpha^k) &\geq &
\frac{1}{n}\sum_{i\in S_k} \left[-\phi_i^*(-\alpha^{k+1}_i)+\phi_i^*(-\alpha_i^k)\right]  -
\frac{1}{n} \ve{\bA ^\top w^k}{\alpha^{k+1}-\alpha^k} -\frac{1}{2\lambda n^2} (\alpha^{k+1}-\alpha^{k})^\top \bA^\top \bA
(\alpha^{k+1}-\alpha^{k})\\
&=& \max_{h\in \R^n} \left\{ \frac{1}{n}\sum_{i\in S_k} \left[-\phi_i^*(-\alpha_{i}^k-h_i)+\phi_i^*(-\alpha_i^k)\right]  -
\frac{1}{n} \ve{(\bA^\top w^k)_{S_k}}{ h} -\frac{1}{2 n} h^\top \bX_{S_k} h\right\},
\end{eqnarray*}
where in the last equality we used the dual update rules in Algorithm~\ref{alg:pd}, as well as relations~\eqref{eq:hSw}
and~\eqref{eq:hhatS}.  
Therefore, for arbitrary $h\in \R^ n$,
\begin{eqnarray}
 \Exp_k[D( \alpha^{k+1})-D(\alpha^k)]
&\geq &  \Exp_k \left[ \frac{1}{n}\sum_{i\in S_k} \left[-\phi_i^*(-\alpha_{i}^k-h_i)+\phi_i^*(-\alpha_i^k)\right] \right] 
- \Exp_k \left[
\frac{1}{n} \ve{(\bA^\top w^k)_{S_k}}{ h} -\frac{1}{2n} h^\top \bX_{S_k} h\right] \notag \\
& =& \frac{1}{n}\sum_{i=1}^n 
p_i\left[-\phi_i^*(-\alpha_{i}^k-h_i)+\phi_i^*(-\alpha_i^k) - (a_i^\top w^k) h_i\right]  \notag -\frac{1}{2 n}
h^\top \Exp[\bX_{\Sam}]h.  \label{a-htodsf}
\end{eqnarray}
Let $u^k \in \R^n$ such that $u_i^k=\nabla \phi_i(a_i^ \top w^k) \in \R$ for all $ i\in [n]$. Let $s=(s_1,\dots,s_n)\in [0,1]^ n$ with  $s_i=\theta p_i^{-1}$ for all $i\in [n]$, where
 $\theta$ is given in~\eqref{a:theta}.
By using $h_i=-s_i(\alpha^k_i+u^k_i)$ for all $i\in [n]$ in~\eqref{a-htodsf}, we get:
\begin{eqnarray*}
\Exp_k[D(\alpha^{k+1})-D(\alpha^k)]
&\geq  &
 \frac{1}{n}\sum_{i=1}^n 
p_i [-\phi_i^*\left(-(1-s_i)\alpha_i^k+s_i u^k_i\right)+\phi_i^*(-\alpha_i^k) + s_i \ve{a_i^\top w^k}{ \alpha_i^k+u^k_i} ] \\
&& \qquad 
-\frac{1}{2n}
(\alpha^k+u^k)^\top \bD(s) \Exp[\bX_{\Sam}] \bD(s)  (\alpha^k+u^k)
\end{eqnarray*}
From $\gamma$-strong convexity of the functions $\phi_i^*$ we deduce that:
\begin{eqnarray*}
-\phi_i^*((1-s_i)(-\alpha_i^k) + s_i u^k_i )+\phi_i^*(-\alpha_i^k)
 \geq s_i \phi_i^*(-\alpha_i^k)-s_i\phi_i^*(u^k_i ) +\frac{\gamma s_i(1-s_i) }{2}
| u^k_i +\alpha_i^k|^2.
\end{eqnarray*}
Consequently, 
\begin{eqnarray*}
\Exp_k[D(\alpha^{k+1})-D(\alpha^k)]
&\geq & 
 \frac{1}{n}\sum_{i=1}^n 
p_is_i \left[ \phi_i^*(-\alpha_i^k)-\phi_i^*(u^k_i )+  \ve{a_i^\top w^k}{ \alpha_i^k+u^k_i}\right]  
+\frac{1}{n}\sum_{i=1}^ n\frac{\gamma p_i s_i(1-s_i) }{2}
| u^k_i +\alpha_i^k|^2  
\\&&   \qquad-\frac{1}{2 n}
(\alpha^k+u^k)^\top \bD(s)\Exp[\bX_{\Sam}]\bD(s)(\alpha^k+u^k)\\
&=&
\frac{\theta}{n}\sum_{i=1}^n 
\left[ \phi_i^*(-\alpha_i^k)+\phi_i(a_i^\top w^k )  + \ve{a_i^\top w^k}{\alpha_i^k}\right]
+\frac{\gamma \theta}{2n}\ve{
\alpha^k+u^k}{(\bI-\bD(s))(\alpha^k+u^k)}  \\ & & \qquad-\frac{1}{2 n}
\ve{\alpha^k+u^k}{ \bD(s)\Exp[\bX_{\Sam}]\bD(s)(\alpha^k+u^k)}
\end{eqnarray*}
where the equality follows from $
 u_i^k=\nabla \phi_i(a_i^ \top w^k)$.
Next, by the definition of $\theta$ in~\eqref{a:theta}, we know that:
\begin{eqnarray*}
 \gamma \bI &\succeq & \theta \gamma \bD(p^{-1}) +\frac{\theta}{\lambda n}\bD(v\circ p^{-1}) \\
 &=& \gamma \bD(s)+\frac{1}{\theta\lambda n}\bD(s)\bD(v\circ p) \bD(s) \;\; \overset{\eqref{a:PM}}{\succeq} \;\; \gamma \bD(s)+\frac{1}{\theta}\bD(s)\Exp[\bX_{\Sam}] \bD(s).
\end{eqnarray*}
Finally, it follows that
\begin{eqnarray*}
\Exp_k[D(\alpha^{k+1})-D(\alpha^k)]
&\geq & \frac{\theta}{n}\sum_{i=1}^n \left[ \phi_i^*(-\alpha_i^k)+\phi_i(a_i^\top w^k) + 
\ve{a_i^\top w^k}{\alpha_i^k}\right]
\;\;=\;\; \theta (P(w^k)-D(\alpha^k)).
\end{eqnarray*}


\section*{APPENDIX: More insight into the relationship between $\sigma_2$ and $\sigma_3$}

In the main text we have shown that $\sigma_2\geq \sigma_3$, where $\sigma_2$ is the rate of Method 2 and $\sigma_3$ is the rate of Method 3:  NSync \cite{NSync}. In this section we give a more detailed description of the relationship between these two quantities in the case when $\Sam$ is the $\tau$-nice sampling \cite{PCDM}. That is, $\Sam$ picks subsets of $[n]$ of cardinality $\tau$, uniformly at random. For this sampling, 

\[p_i := \Prob(i\in \Sam) =\frac{\tau}{n} .\]

\begin{proposition}
Suppose that $\bG=\bM$ and $\Sam$ be the $\tau$-nice sampling. Then there exists $\beta \in [1, \tau]$ such that one can choose $v_i = \beta \bM_{i,i}$  
 and 
\[
\sigma_2 = \frac{\beta \sigma_3}{(1-\frac{\tau-1}{n-1}) + \frac{n}{\tau}\frac{\tau-1}{n-1}\beta \sigma_3}.
\]
\end{proposition}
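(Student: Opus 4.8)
The plan is to collapse both $\sigma_2$ and $\sigma_3$ into scalar quantities controlled by the spectrum of a single normalized matrix, and then to eliminate that spectrum to obtain the stated identity.

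First I would compute $\Exp[\bM_{\Sam}]$ explicitly for the $\tau$-nice sampling. Here $p_i=\tau/n$, $\Prob(i\in\Sam)=\tau/n$, and $\Prob(\{i,j\}\subseteq\Sam)=\tfrac{\tau(\tau-1)}{n(n-1)}$ for $i\neq j$. Writing $q:=\frac{\tau-1}{n-1}$ and letting $\bD_{\bM}$ denote the diagonal matrix carrying the diagonal of $\bM$, separating diagonal from off-diagonal entries gives
\[
\Exp[\bM_{\Sam}]=\frac{\tau}{n}\bigl(q\,\bM+(1-q)\bD_{\bM}\bigr).
\]
Next I would pass to the normalized matrix $\tilde{\bM}:=\bD_{\bM}^{-1/2}\bM\bD_{\bM}^{-1/2}$, which has unit diagonal and hence trace $n$; set $\mu:=\lambda_{\min}(\tilde{\bM})$ and note that the average eigenvalue is $1$, so $\lambda_{\max}(\tilde{\bM})\in[1,n]$. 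Since $\bG=\bM$ and $\bD(p)=\frac{\tau}{n}\bI$, a congruence by $\bD_{\bM}^{-1/2}$ converts $\sigma_3$ (with $v_i=\beta\bM_{ii}$) into $\sigma_3=\frac{\tau}{n}\cdot\frac{\mu}{\beta}$, while $\sigma_2$ becomes $\frac{\tau}{n}\,\lambda_{\min}\bigl(\tilde{\bM}(q\tilde{\bM}+(1-q)\bI)^{-1}\bigr)$.

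The step I expect to be the main obstacle is evaluating this last minimum eigenvalue, because $\bM$ and $q\bM+(1-q)\bD_{\bM}$ do not commute, so their eigenvalues cannot be read off jointly. The resolution is precisely the passage to $\tilde{\bM}$: after the congruence, both $\tilde{\bM}$ and $(q\tilde{\bM}+(1-q)\bI)^{-1}$ are polynomials in $\tilde{\bM}$ and are therefore simultaneously diagonalizable, so the spectrum of their product is $\{\lambda_i/(q\lambda_i+1-q)\}$ over the eigenvalues $\lambda_i$ of $\tilde{\bM}$. Because $t\mapsto\frac{t}{qt+1-q}$ is increasing on $t>0$ for $q\in[0,1)$, the minimum is attained at $\mu$, giving $\sigma_2=\frac{\tau}{n}\cdot\frac{\mu}{q\mu+1-q}$.

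Finally I would eliminate $\mu$. From $\sigma_3=\frac{\tau}{n}\cdot\frac{\mu}{\beta}$ we get $\mu=\frac{n\beta}{\tau}\sigma_3$, and substituting into the formula for $\sigma_2$ and simplifying yields
\[
\sigma_2=\frac{\beta\sigma_3}{(1-q)+\frac{n}{\tau}q\beta\sigma_3},
\]
which is the claimed identity once $q$ is written out. For the admissible range of $\beta$, the ESO $\Exp[\bM_{\Sam}]\preceq\bD(p)\bD(v)$ with $v_i=\beta\bM_{ii}$ is equivalent, under the same congruence, to $q\,\lambda_{\max}(\tilde{\bM})\leq\beta-1+q$, i.e.\ $\beta\geq 1+q(\lambda_{\max}(\tilde{\bM})-1)$. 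Since $\lambda_{\max}(\tilde{\bM})\in[1,n]$, the minimal valid $\beta$ lies in $[1,\,1+q(n-1)]=[1,\tau]$, so one may indeed take $\beta\in[1,\tau]$.
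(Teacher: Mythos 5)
Your proof is correct and follows essentially the same route as the paper: compute $\Exp[\bM_{\Sam}]$ for the $\tau$-nice sampling, reduce $\sigma_2$ and $\sigma_3$ to the extreme eigenvalue of a diagonally normalized version of $\bM$ (you use $\bD_{\bM}^{-1/2}\bM\bD_{\bM}^{-1/2}$ where the paper uses its inverse $\bM^{-1/2}\bD\bM^{-1/2}$, a cosmetic difference), and eliminate that eigenvalue to get the identity. The one genuine improvement is that you derive the admissible range $\beta\in[1,\tau]$ directly from the ESO inequality and the trace bound $\lambda_{\max}(\tilde{\bM})\leq n$, whereas the paper simply cites \cite{PCDM} for this fact.
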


\begin{proof}

As explained in~\cite{PCDM}, \eqref{eq:ESO} is always true if we take $v_i = \beta \bM_{i,i}$ with $\beta = \tau$ but 
smaller values (leading to a faster algorithm) may be computable if the problem exhibits a  property called ``partial separability''.

Let us denote by $\bD$ the diagonal matrix whose entries are the diagonal entries of $\bM$.
\begin{align*}
&(\bM_{[\Sam]})_{i,i}=\begin{cases} M_{i,i}=D_{i,i} & \text{ if } i \in \Sam \quad \text{ (probability } \frac{\tau}{n} \text{)} \\ 0 & \text{ otherwise} \end{cases} \\
&(\bM_{[\Sam, \Sam]})_{i,j}=\begin{cases} M_{i,j} & \text{ if } i \in \Sam \text{ and }  j \in \Sam \quad \text{ (probability } \frac{\tau(\tau-1)}{n(n-1)} \text{)}\\ 0 & \text{ otherwise.} \end{cases}
\end{align*}
Hence, 
\[
\mathbf{E}[\bM_{\Sam}]= \frac{\tau}{n} \bD + \frac{\tau(\tau-1)}{n(n-1)}(\bM-\bD) 
= \frac{\tau}{n} \Big( (1-\frac{\tau-1}{n-1} )\bD +\frac{\tau-1}{n-1}\bM\Big).
\]
Let us denote $\bA = \bM^{-1/2} \bD \bM^{-1/2}$ and $\alpha = \frac{\tau-1}{n-1}$.
\[
\sigma_3 \overset{\eqref{eq:sigma3}}{=} \frac{\tau}{n} \beta^{-1} \lambda_{\min}(  \bM^{1/2} \bD^{-1} \bM^{1/2}) = \frac{\tau}{n} \beta^{-1} \big(\lambda_{\max}( \bA)\big)^{-1}
\]
\begin{align*}
\sigma_2 &\overset{\eqref{eq:sigma2}}{=}  \frac{\tau^2}{n^2} \lambda_{\min}(  \bM^{1/2} (\mathbf{E}[\bM_{\Sam}])^{-1} \bM^{1/2}) = \frac{\tau^2}{n^2} \lambda_{\min}(  \bM^{1/2} \frac{n}{\tau}((1-\alpha) \bD + \alpha \bM)^{-1} \bM^{1/2})  \\
& \; = \; \frac{\tau}{n} \big(\lambda_{\max}(  \bM^{-1/2} ((1-\alpha) \bD + \alpha \bM) \bM^{-1/2})\big)^{-1} = \frac{\tau}{n} \big(\lambda_{\max}( (1-\alpha) \bA + \alpha \bI )\big)^{-1}
\end{align*}

But we have
$\lambda_{\max}((1- \alpha) \bA + \alpha \bI ) = (1-\alpha) \lambda_{\max}(\bA) + \alpha$, so
\begin{align*}
& \frac{\tau}{n \sigma_2} = (1-\alpha) \frac{\tau}{n \beta \sigma_3} +  \alpha \\
& \frac{n \sigma_2}{\tau} = \frac{1}{ (1-\alpha)\frac{\tau}{n \beta \sigma_3} + \alpha } \\
& \sigma_2= \frac{\sigma_3}{(1-\alpha) \beta^{-1} +\alpha \frac{n}{\tau}\sigma_3} =  \frac{\beta\sigma_3}{ (1-\frac{\tau-1}{n-1}) +\frac{\tau-1}{n-1} \frac{n}{\tau}\beta \sigma_3} 
\end{align*} 

Note that if $\sigma_3$ is small, then $\sigma_2$ is of the order of $ \frac{\beta \sigma_3}{1-\frac{\tau-1}{n-1} } > \beta \sigma_3$ .
\end{proof}

\end{document}